\newtheorem{theorem}{Theorem}
\newtheorem{lemma}{Lemma}
\newtheorem{definition}{Definition}
\newcommand\shortproof{Y} %
\newenvironment{proofsketch}{%
  \proof}{\endproof}
\newcommand\mainproof[2]{%
    \if\shortproof N {\begin{proof}
    #1
    \end{proof}}%
    \else {\begin{proofsketch}
    #2
    \end{proofsketch}}%
    \fi}
\pgfplotsset{compat=1.17,
every axis/.append style={
                    ylabel style = {font=\LARGE,yshift=3.5ex, rotate = -90},
                    xlabel style = {font=\LARGE,yshift=-1.5ex},
                    legend style = {font=\Large},
                    tick label style={font=\large}  
                    }}
\DeclareMathOperator{\argmin}{arg\,min}
\renewcommand{\P}{\mathbb{P}}
\newcommand{\E}{\mathbb{E}}
\newcommand\1[1]{\mathbbm{1}\left\{#1\right\}}
\newcommand{\event}{\mathcal{E}}
\newcommand{\filtration}{\mathcal{F}}
\newcommand{\ceil}[1]{\left\lceil #1 \right\rceil}
\renewcommand{\vec}[1]{{\boldsymbol{#1}}}
\renewcommand{\equiv}{\vcentcolon =}
\newcommand{\risk}{\textnormal{Risk}}
\newcommand{\cumuloss}{\tilde\Lambda}
\newcommand{\tDmin}{\cumuloss_t^{\text{min}}}
\newcommand{\expertgap}{\Delta}
\newcommand{\lefto}{\mathopen{}\left}
\newcommand{\righto}{\right}
\renewcommand{\epsilon}{\varepsilon}
\newcommand{\figqplotcenterleg}{\begin{tikzpicture}
    \begin{axis}[xmin = -0.02, xmax = 1.02,
    ymin = -0.02, ymax = 1.02, legend style={at={(0.33,0.37)},anchor=west},
         xlabel=$x$,
         ylabel={$q^*(x,\eta)$},
         every axis plot/.append style={very thick}]
    \addplot[blue] table [y=etaa, x=x]{data/qfig-fmc.txt};
    \addlegendentry{$\eta=0.1$}
    \addplot[green] table [y=etab, x=x]{data/qfig-fmc.txt};
    \addlegendentry{$\eta=0.8$}
    \addplot[red] table [y=etac, x=x]{data/qfig-fmc.txt};
    \addlegendentry{$\eta=2$}
    \addplot[orange] table [y=etad, x=x]{data/qfig-fmc.txt};
    \addlegendentry{$\eta=6$}
    \end{axis}
    \end{tikzpicture}}
    \newcommand{\figqplotzoom}{\begin{tikzpicture}
    \begin{axis}[xmin = -0.005, xmax = 0.05,
    ymin = -0.02, ymax = 1.02, legend style={at={(0.1,0.7)},anchor=west},
         xlabel=$x$,
         ylabel={$q^*(x,\eta)$},
         every axis plot/.append style={very thick}]
    \addplot[blue] table [y=etaa, x=x]{data/qfigzoom.txt};
    \addlegendentry{$\eta=0.1$}
    \addplot[green] table [y=etab, x=x]{data/qfigzoom.txt};
    \addlegendentry{$\eta=0.8$}
    \addplot[red] table [y=etac, x=x]{data/qfigzoom.txt};
    \addlegendentry{$\eta=2$}
    \addplot[orange] table [y=etad, x=x]{data/qfigzoom.txt};
    \addlegendentry{$\eta=6$}
    \end{axis}
    \end{tikzpicture}}
\newcommand{\figonektwos}{\begin{tikzpicture}
    \begin{axis}[xmin = 0, xmax = 5*10^4,
    ymin = 0, ymax = 200, legend style={at={(0.02,0.87)},anchor=west},
         xlabel=$t$,
         ylabel={$\E[R_t]$},
         every axis plot/.append style={very thick}]
    \addplot[red,dashed] table [y=passive, x=n]{data/fig1k2s.txt};
    \addlegendentry{Full information}
    \addplot[blue] table [y=active, x=n]{data/fig1k2s.txt};
    \addlegendentry{Label efficient}

    \addplot [name path=upper,draw=none] table[x=n,y expr=\thisrow{passive}+\thisrow{passivebar}] {data/fig1k2s.txt};
\addplot [name path=lower,draw=none] table[x=n,y expr=\thisrow{passive}-\thisrow{passivebar}] {data/fig1k2s.txt};
\addplot [fill=red!10] fill between[of=upper and lower];
    
    \addplot [name path=upper,draw=none] table[x=n,y expr=\thisrow{active}+\thisrow{activebar}] {data/fig1k2s.txt};
\addplot [name path=lower,draw=none] table[x=n,y expr=\thisrow{active}-\thisrow{activebar}] {data/fig1k2s.txt};
\addplot [fill=blue!10] fill between[of=upper and lower];

    \end{axis}
    \end{tikzpicture}}
\newcommand{\figonekonefives}{\begin{tikzpicture}
    \begin{axis}[xmin = 0, xmax = 5*10^4,
    ymin = 0, ymax = 200, legend style={at={(0.02,0.87)},anchor=west},
         xlabel=$t$,
         ylabel={$\E[R_t]$},
         every axis plot/.append style={very thick}]
    \addplot[red,dashed] table [y=passive, x=n]{data/fig1k15s.txt};
    \addlegendentry{Full information}
    \addplot[blue] table [y=active, x=n]{data/fig1k15s.txt};
    \addlegendentry{Label efficient}

    \addplot [name path=upper,draw=none] table[x=n,y expr=\thisrow{passive}+\thisrow{passivebar}] {data/fig1k15s.txt};
\addplot [name path=lower,draw=none] table[x=n,y expr=\thisrow{passive}-\thisrow{passivebar}] {data/fig1k15s.txt};
\addplot [fill=red!10] fill between[of=upper and lower];
    
    \addplot [name path=upper,draw=none] table[x=n,y expr=\thisrow{active}+\thisrow{activebar}] {data/fig1k15s.txt};
\addplot [name path=lower,draw=none] table[x=n,y expr=\thisrow{active}-\thisrow{activebar}] {data/fig1k15s.txt};
\addplot [fill=blue!10] fill between[of=upper and lower];

    \end{axis}
    \end{tikzpicture}}
\newcommand{\figtwoktwos}{\begin{tikzpicture}
    \begin{axis}[xmin = 0, xmax = 5*10^4,
    ymin = 0, ymax = 8000, legend style={at={(0.02,0.86)},anchor=west},
         xlabel=$t$,
         ylabel={$\E [S_t]$},
         every axis plot/.append style={very thick}]
    \addplot[blue] table [y=samples, x=n]{data/fig2k2s.txt};

    \addplot [name path=upper,draw=none] table[x=n,y expr=\thisrow{samples}+\thisrow{samplesbar}] {data/fig2k2s.txt};
\addplot [name path=lower,draw=none] table[x=n,y expr=\thisrow{samples}-\thisrow{samplesbar}] {data/fig2k2s.txt};
\addplot [fill=blue!10] fill between[of=upper and lower];

    \end{axis}
    \end{tikzpicture}}
\newcommand{\figtwokonefives}{\begin{tikzpicture}
    \begin{axis}[xmin = 0, xmax = 5*10^4,
    ymin = 0, ymax = 8000, legend style={at={(0.02,0.86)},anchor=west},
         xlabel=$t$,
         ylabel={$\E [S_t]$},
         every axis plot/.append style={very thick}]
    \addplot[blue] table [y=samples, x=n]{data/fig2k15s.txt};

    \addplot [name path=upper,draw=none] table[x=n,y expr=\thisrow{samples}+\thisrow{samplesbar}] {data/fig2k15s.txt};
\addplot [name path=lower,draw=none] table[x=n,y expr=\thisrow{samples}-\thisrow{samplesbar}] {data/fig2k15s.txt};
\addplot [fill=blue!10] fill between[of=upper and lower];

    \end{axis}
    \end{tikzpicture}}
\newcommand{\figthreektwos}{\begin{tikzpicture}
    \begin{loglogaxis}[xmin = 1, xmax = 5*10^4,
    ymin = 0.001, ymax = 0.3, legend style={at={(0.05,0.18)},anchor=west},
         xlabel={$\E[S_t]$},
         ylabel={\huge$\frac{\E[R_t]}{t}$},
         every axis plot/.append style={very thick}]
    \addplot[red,dashed] table [y=passive, x=n]{data/fig3k2s.txt};
    \addlegendentry{Full information}
    \addplot[blue] table [y=active, x=samples]{data/fig3k2s.txt};
    \addlegendentry{Label efficient}
    \addplot[red,dotted] table [y=kpassive, x=n]{data/fig3k2s.txt};
    \addplot[blue,dotted] table [y=kactive, x=n]{data/fig3k2s.txt};

 \addplot [name path=upper,draw=none] table[x=n,y expr=\thisrow{passive}+\thisrow{passivebar}] {data/fig3k2s.txt};
\addplot [name path=lower,draw=none] table[x=n,y expr=\thisrow{passive}-\thisrow{passivebar}] {data/fig3k2s.txt};
\addplot [fill=red!10] fill between[of=upper and lower];

 \addplot [name path=upper,draw=none] table[x expr=\thisrow{samples}-\thisrow{samplesbar},y expr=\thisrow{active}-\thisrow{activebar}] {data/fig3k2s.txt};
\addplot [name path=lower,draw=none] table[x expr=\thisrow{samples}+\thisrow{samplesbar},y expr=\thisrow{active}+\thisrow{activebar}] {data/fig3k2s.txt};
\addplot [fill=blue!10] fill between[of=upper and lower];
    
    \end{loglogaxis}
    \end{tikzpicture}}
\newcommand{\figthreekonefives}{\begin{tikzpicture}
    \begin{loglogaxis}[xmin = 1, xmax = 5*10^4,
    ymin = 0.001, ymax = 0.3, legend style={at={(0.05,0.18)},anchor=west},
         xlabel={$\E[S_t]$},
         ylabel={\huge$\frac{\E[R_t]}{t}$},
         every axis plot/.append style={very thick}]
    \addplot[red,dashed] table [y=passive, x=n]{data/fig3k15s.txt};
    \addlegendentry{Full information}
    \addplot[blue] table [y=active, x=samples]{data/fig3k15s.txt};
    \addlegendentry{Label efficient}
    \addplot[red,dotted] table [y=kpassive, x=n]{data/fig3k15s.txt};
    \addplot[blue,dotted] table [y=kactive, x=n]{data/fig3k15s.txt};

 \addplot [name path=upper,draw=none] table[x=n,y expr=\thisrow{passive}+\thisrow{passivebar}] {data/fig3k15s.txt};
\addplot [name path=lower,draw=none] table[x=n,y expr=\thisrow{passive}-\thisrow{passivebar}] {data/fig3k15s.txt};
\addplot [fill=red!10] fill between[of=upper and lower];

 \addplot [name path=upper,draw=none] table[x expr=\thisrow{samples}-\thisrow{samplesbar},y expr=\thisrow{active}-\thisrow{activebar}] {data/fig3k15s.txt};
\addplot [name path=lower,draw=none] table[x expr=\thisrow{samples}+\thisrow{samplesbar},y expr=\thisrow{active}+\thisrow{activebar}] {data/fig3k15s.txt};
\addplot [fill=blue!10] fill between[of=upper and lower];
    
    \end{loglogaxis}
    \end{tikzpicture}}
\title{Adaptive Selective Sampling for Online Prediction with Experts}
\author{%
  Rui M. Castro \\
  Eindhoven University of Technology, \\
  Eindhoven Artificial Intelligence Systems Institute (EAISI)\\
  \texttt{rmcastro@tue.nl} \\
   \And
   Fredrik Hellström \\
   University College London \\
   \texttt{f.hellstrom@ucl.ac.uk} \\
   \AND
   Tim van Erven \\
   University of Amsterdam \\
   \texttt{tim@timvanerven.nl} \\
}
\begin{document}

\maketitle

\begin{abstract}%
  We consider online prediction of a binary sequence with expert advice.
  For this setting, we devise label-efficient forecasting algorithms, which use a selective sampling scheme that enables collecting much fewer labels than standard procedures.
  For the general case without a perfect expert, we prove best-of-both-worlds guarantees, demonstrating that the proposed forecasting algorithm always queries sufficiently
many labels in the worst case to obtain optimal regret guarantees, while
simultaneously querying much fewer labels in more benign settings.
  Specifically, 
  for a scenario where one expert is strictly better than the others in expectation, we show that the label complexity of the label-efficient forecaster is roughly upper-bounded by the square root of the number of rounds.
  Finally, we present numerical experiments empirically showing that the normalized regret of the label-efficient forecaster can asymptotically match known minimax rates for pool-based active learning, suggesting it can optimally adapt to benign settings.
\end{abstract}

\section{Introduction}

This paper considers online prediction with expert advice in settings where collecting feedback might be costly or undesirable. 
In the classical framework of sequence prediction with expert advice, a forecasting algorithm aims to sequentially predict a stream of labels on the basis of predictions issued by a number of experts (see, for instance, \cite{vovk:90,littlestone:94, lugosi:2006} and references therein). 
Typically, the forecaster receives the correct label after making a prediction, and uses that feedback to update its prediction strategy. 
There are, however, situations where collecting labels is costly and potentially unnecessary. 
In the context of online prediction, this naturally leads to the notion of \emph{selective sampling} strategies, also called \emph{label-efficient prediction} \citep{helmbold:1997,cesa-bianchi:03,kumar:2012,dekel:2012,orabona:11,hoeven:22}. 
In this line of work, there is a natural tension between performance  (in terms of regret bounds) and label complexity, i.e., the number of labels collected. 
For a worst-case scenario, the optimal label-efficient strategy amounts to ``flipping a coin'' to decide whether or not to collect feedback, irrespective of past actions and performance \citep{cesa-bianchi:03}. 
Indeed, in the worst case, the number of labels that
one has to collect is linear in the number of rounds for any algorithm~\citep{stoltz:05}.
This is a rather pessimistic perspective, and can miss the opportunity to reduce label complexity when prediction is easy. %
With this in mind, the adaptive selective sampling algorithms we develop follow naturally from a simple design principle: optimize the label collection probability at any time while preserving worst-case regret guarantees. This principled perspective leads to a general way to devise simple but rather powerful algorithms.
These are endowed with optimal worst-case performance guarantees, while allowing the forecaster to naturally adapt to benign scenarios and collect much fewer labels than standard (non-selective sampling) algorithms.

From a statistical perspective, the scenario above is closely related to the paradigm of \emph{active learning} \citep{mackay:91,cohn:96,freund:97,castro:08,balcan:06,langford:09}. For instance, in pool-based active learning, the learner has access to a large pool of unlabeled examples, and can sequentially request labels from selected examples.
This extra flexibility, when used wisely, can enable learning a good prediction rule with much fewer labeled examples than what is needed in a \emph{passive learning} setting, where labeled examples are uniformly sampled from the pool in an unguided way \citep{castro:05,castro:08,balcan:06,dasgupta:05a,dasgupta:05b,balcan:14a,balcan:14b,krause:07,zhu:03,williams:07,chen:12,epshteyn:08,hanneke:14}. 
Our work is partly motivated by such active learning frameworks, with the aim of devising a simple and adaptive methodology that does not rely on intricate modeling assumptions.

The main contributions of this paper are novel label-efficient exponentially weighted forecasting algorithms, which optimally decide whether or not to collect feedback. 
The proposed approach confirms, in a sound way, the intuition that collecting labels is more beneficial whenever there is a lack of consensus among the (weighted) experts. 
The proposed algorithms are designed to ensure that, in adversarial settings, they retain the known worst-case regret guarantees for full-information forecasters (i.e., forecasters that collect all labels) while providing enough flexibility to attain low label complexity in benign scenarios.
To characterize the label complexity of the label-efficient forecaster, we focus on a scenario where the expected loss difference between the best expert and all other experts for all~$n$ rounds is lower-bounded by~$\expertgap$, and show that the label complexity is roughly~$\sqrt n/\expertgap^2$, ignoring logarithmic factors. 
This shows that the label-efficient forecaster
achieves the ``best of both worlds'': it smoothly interpolates between
the worst case, where no method can have optimal regret with less
than~$O(n)$ queries, and the benign, stochastic case, where it is
sufficient to make $O(\sqrt n)$ queries.
Finally, to further examine the performance of the label-efficient forecaster, we conduct a simulation study. 
We find that the performance of the label-efficient forecaster is comparable to its full-information counterpart, while collecting significantly fewer labels. 
Intriguingly, for a threshold prediction setting studied in~\cite{castro:08}, the numerical results indicate that the label-efficient forecaster optimally adapts to the underlying prediction problem, so that its normalized regret displays the same asymptotic behavior as known minimax rates for active learning.

Before formally introducing our setting, we discuss additional related work.
Selective sampling for online learning was studied by~\cite{helmbold:1997,cesa-bianchi:03,cesa-bianchi:06}, with a focus on probabilistic threshold functions and margin-based sampling strategies.
Similarly,~\cite{orabona:11} consider kernel-based linear classifiers, and base their sampling procedure on the estimated margin of the classifier.
For the same setting as we consider, \cite{zhao:13,hao:18} propose a selective sampling approach based on the maximum (unweighted) prediction disagreement among the experts, and numerically demonstrate its merits.
Finally, results in a similar spirit  to ours have recently been established in different settings.
Namely, for a strongly convex loss, \cite{hoeven:22} devised an algorithm for selective sampling with expert advice, which provably retains worst-case regret guarantees, where the sampling strategy is based on the variance of the forecaster's prediction.
\cite{chen:21} study a setting with shifting hidden domains, and establish a tradeoff between regret and label complexity in terms of properties of these domains.
For a setting where the hypothesis class has bounded VC dimension and the data satisfies a Tsybakov noise condition, \cite{huang:22} devise a sampling strategy, with bounds on the regret and label complexity, based on a notion of disagreement where hypotheses are discarded based on their discrepancy relative to the empirical risk minimizer.

\section{Setting}

Throughout, we focus on a binary prediction task with the zero-one loss as a performance metric.  
We refer to $y_t$ as the outcome at time $t\in[n]\equiv\{1,\ldots,n\}$. 
No assumptions are made on this sequence, which can potentially be created in an adversarial way. 
To aid in the prediction task, the forecaster has access to the predictions of~$N$ experts. 
The prediction of the forecaster at time $t$ can only be a function of the expert predictions (up to time $t$) and the observed outcomes up to time~$t-1$. 
Furthermore, the algorithm can make use of internal randomization.

Formally, let $f_{i,t}\in\{0,1\}$, with $i\in[N]\equiv\{1,\ldots,N\}$ and $t\in[n]$, denote the advice of the experts. 
At every time $t\in[n]$, the forecasting algorithm must: (i) output a prediction $\hat y_t$ of $y_t$; (ii) decide whether or not to observe $y_t$.
Specifically, for each round $t\in[n]$:
\begin{itemize}
\item The environment chooses the outcome $y_t$ and the expert advice $\left\{f_{i,t}\right\}_{i=1}^N$. 
Only the expert advice is revealed to the forecaster.
\item The forecaster outputs a (possibly randomized) prediction $\hat y_t$, based on all of the information that it has observed so far.
\item The forecaster decides whether or not to have $y_t$ revealed. 
We let $Z_t$ be the indicator of that decision, where $Z_t=1$ if $y_t$ is revealed and $Z_t=0$ otherwise.
\item A loss $\ell(\hat y_t,y_t) \equiv \1{\hat y_t\neq y_t}$ is incurred by the forecaster
and a loss $\ell_{i,t}\equiv \ell(f_{i,t},y_t)$ is incurred by expert $i$, regardless of the value of $Z_t$.
\end{itemize}
Our goal is to devise a forecaster that observes as few labels as possible, while achieving low regret with respect to any specific expert.  
Regret with respect to the best expert at time $n$ is defined as
$$R_n\equiv L_n-\min_{i\in[N]} L_{i,n}\ ,$$
where $L_n\equiv\sum_{t=1}^n \ell(\hat y_t,y_t)$ and $L_{i,n}\equiv\sum_{t=1}^n \ell(f_{i,t},y_t)$. 
Note that the regret $R_n$ is, in general, a random quantity.  In this work, we focus mainly on the expected regret $\E[R_n]$.

Clearly, when no restrictions are imposed on the number of labels collected, the optimal approach would be to always observe the outcomes (i.e., take $Z_t= 1$ for all $t\in[n]$).  
This is optimal in a worst-case sense, but there are situations where one can predict as efficiently while collecting much fewer labels. The main goal of this paper is the development and analysis of methods that are able to capitalize on such situations, while still being endowed with optimal worst-case guarantees.

\subsection{Exponentially weighted forecasters}\label{sec:a_general_algorithm}

All proposed algorithms in this paper are variations of \emph{exponentially weighted forecasters}~\citep{littlestone:94}. 
For each time $t\in[n]$, such algorithms assign a weight~$w_{i,t}\geq 0$ to the~$i$th expert.
The forecast prediction at time $t$ and decision whether to observe the outcome or not are randomized, and based exclusively on the expert weights and the expert predictions at that time. 
Therefore,~$\hat y_t\sim\text{Ber}(p_t)$ and~$Z_t \sim \text{Ber}(q_t)$ are conditionally independent Bernoulli random variables given $p_t$ and $q_t$.
Here,~$p_t$ and~$q_t$ depend on the past only via the weights~$\{w_{i,j-1}\}_{i\in[N],j\in[t]}$ and the current expert predictions~$\{f_{i,t}\}_{i\in[N]}$.
The exact specifications of $p_t$ and $q_t$ depend on the setting and assumptions under consideration.

After a prediction has been made, the weights for all experts are updated using the exponential weights update based on the importance-weighted losses~$\ell_{i,t}Z_t/q_t$. Specifically,
\begin{equation}\label{eqn:weight_update}
w_{i,t}=w_{i,t-1}\ e^{-\eta\frac{\ell_{i,t}Z_t}{q_t}}\ ,
\end{equation}
where~$\eta>0$ is the learning rate.
To ensure that the
definition in \eqref{eqn:weight_update} is sound for any $q_t\geq
0$, we set~$w_{i,t}=w_{i,t-1}$ if~$q_t=0$. Finally, we define the weighted average of experts predicting label~$1$ at time~$t$ as
\begin{equation}
A_{1,t}\equiv \frac{\sum_{i=1}^N w_{i,t-1} f_{i,t}}{\sum_{i=1}^N
w_{i,t-1}}.\label{eqn:A1}
\end{equation}
This quantity plays a crucial role in our sampling strategy. We will use the name exponentially weighted forecaster liberally to
refer to any forecaster for which $p_t$ is a function of $A_{1,t}$.
Throughout, we assume that the weights for all forecasters are uniformly initialized as~$w_{i,0}=1/N$ for~$i\in [N]$.

\section{Regret bounds with a perfect expert}\label{sec:a_perfect_expert}

In this section, we consider a very optimistic scenario where one expert is perfect, in the sense that it does not 
make any mistakes. 
The results and derivation for this setting are didactic, and pave the way for more general scenarios where this assumption is dropped.
We say that the~$i$th expert is perfect if $\ell_{i,t}=0$ for all $t\in[n]$. 
The existence of such an expert implies that $\min_{i \in [N]} L_{i,n}=0$.
Therefore, the regret of a forecaster is simply the number of errors it makes, that is, $R_n=L_n$. In such a scenario, any reasonable algorithm should immediately discard experts as soon as they make even a single mistake.
For an exponentially weighted forecaster, this is equivalent to setting $\eta=\infty$.
Due to the uniform weight initialization, the scaled weight vector~$N\cdot(w_{1,t},\ldots,w_{N,t})$ is thus binary, and indicates which experts agree with all the observed outcomes up to time $t$.
First, consider a scenario where the forecaster always collects feedback, that is, $q_t=1$ for all~$t\in[n]$. 
A natural forecasting strategy at time $t$ is to \emph{follow the
majority}, that is, to predict according to the majority of the experts that have not made a mistake so far.
When the forecaster predicts the wrong label, this implies that at least half of the experts still under consideration are not perfect. 
Since the number of experts under consideration is at least halved for each mistake the forecaster incurs, this strategy is guaranteed to make at most~$\log_2 N$ mistakes.  
Therefore, we have
\begin{equation}
R_n = L_n \leq \log_2 N\ .
\end{equation}
Clearly, this implies the following bound on the expected cumulative loss, and thus the regret:
\begin{equation}\label{eqn:regret_bound_perfect}
\bar L_n^{(N)} \equiv \E[L_n] \leq \log_2 N\ .
\end{equation}
Here, the superscript $(N)$ explicitly denotes the dependence on the number of experts. 
This bound is tight when the minority is always right and nearly equal in size to the majority.

A natural question to ask is if there exists an algorithm that achieves the expected cumulative loss bound \eqref{eqn:regret_bound_perfect} while not necessarily collecting all labels. 
This is, in fact, possible.
The most naive approach is to not collect a label if all experts still under consideration agree on their prediction, as in that case, they must all be correct due to the existence of a perfect expert. 
However, a more refined strategy that can collect fewer labels is possible, leading to the following
theorem.
\begin{theorem}\label{thm:LEFM} %
Consider the exponentially weighted follow-the-majority forecaster with~$\eta=\infty$.
Specifically, let~$p_t=\1{A_{1,t}\geq 1/2}$, so that~$\hat y_t=\1{A_{1,t}\geq 1/2}$. Furthermore, let
\[
q_t=\begin{cases}
    0 & \text{ if } A_{1,t}\in\{0,1\},\\
-\frac{1}{\log_2\min \left(A_{1,t},1-A_{1,t}\right)} & \text{ otherwise.}
\end{cases}
\]
For this forecaster, we have
$$\bar L^{(N)}_n\leq \log_2 N \ .
$$
\end{theorem}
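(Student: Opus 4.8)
The plan is to run a potential-function argument in which the potential measures the number of surviving experts, and to show that on every round the \emph{expected} decrease of the potential dominates the probability of an error. Since a perfect expert exists, $\min_{i\in[N]} L_{i,n}=0$ and therefore $R_n=L_n$, so it suffices to bound $\E[L_n]$. First I would set $W_t = N\sum_{i=1}^N w_{i,t}$; because $\eta=\infty$ each scaled weight $N w_{i,t}$ is either $0$ or $1$, so $W_t$ simply counts the experts that have agreed with every observed outcome up to time $t$. Note that $W_0=N$ and, since the perfect expert is never eliminated, $W_n\geq 1$. I would then define the potential $\Phi_t=\log_2 W_t$, so that $\Phi_0=\log_2 N$ and $\Phi_n\geq 0$. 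Writing $m_t=\1{\hat y_t\neq y_t}$ for the error indicator and letting $\filtration_{t-1}$ be the history entering round $t$, the quantities $A_{1,t}$, $\hat y_t$, and $m_t$ are determined given $\filtration_{t-1}$ and the environment's choice of $y_t$, while $Z_t\sim\text{Ber}(q_t)$ is the only fresh randomness.

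The heart of the argument is the one-step inequality $\E[m_t\mid\filtration_{t-1}]\le\E[\Phi_{t-1}-\Phi_t\mid\filtration_{t-1}]$, which I would establish by cases on $A_{1,t}$. If $A_{1,t}\in\{0,1\}$, all survivors agree; the surviving perfect expert forces this common prediction to be correct, so $m_t=0$, while $q_t=0$ leaves $W_t=W_{t-1}$, giving $0\le 0$. If $A_{1,t}\in(0,1)$ and no error occurs, the right-hand side is nonnegative because $W_t\le W_{t-1}$ always. The crucial case is $A_{1,t}\in(0,1)$ with an error: following the weighted majority, an error means that the wrong side carries the majority fraction $\max(A_{1,t},1-A_{1,t})\geq 1/2$ of the survivors, and exactly these experts are eliminated when $Z_t=1$. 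Hence on $\{Z_t=1\}$ the survivor count shrinks by the factor $\min(A_{1,t},1-A_{1,t})$, so $\Phi_{t-1}-\Phi_t=-\log_2\min(A_{1,t},1-A_{1,t})=1/q_t$, whereas on $\{Z_t=0\}$ the potential is unchanged. Taking the expectation over $Z_t$ yields
\[
\E[\Phi_{t-1}-\Phi_t\mid\filtration_{t-1}] = q_t\cdot\tfrac{1}{q_t} + (1-q_t)\cdot 0 = 1 = m_t,
\]
so the inequality in fact holds with equality on error rounds.

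With the one-step inequality in hand I would sum over $t\in[n]$, take total expectations, and telescope:
\[
\E[L_n] = \sum_{t=1}^n \E[m_t] \le \E[\Phi_0-\Phi_n] = \log_2 N - \E[\Phi_n] \le \log_2 N,
\]
using $\Phi_n\geq 0$. Along the way I would record the checks that make $q_t$ legitimate: since $\min(A_{1,t},1-A_{1,t})\le 1/2$ on $(0,1)$ we have $-\log_2\min(A_{1,t},1-A_{1,t})\geq 1$, so $q_t\in(0,1]$ is a valid probability. The only genuine subtlety is the measurability bookkeeping in the conditional-expectation step, namely isolating $Z_t$ as the sole randomness given $\filtration_{t-1}$ so that $m_t$ can be treated as a constant there; the accompanying conceptual point, which I would emphasize, is that the calibration of $q_t$ is precisely what converts the ``missed'' error rounds (where $Z_t=0$ and the majority-halving argument does not fire) into an exact accounting rather than a lossy bound, so that no factor is lost relative to the full-information guarantee \eqref{eqn:regret_bound_perfect}.
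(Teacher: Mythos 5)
Your proof is correct, but it takes a genuinely different route from the paper's. The paper argues by induction on the horizon $n$ (with the number of experts as a parameter): it conditions on the first round, applies the induction hypothesis $\bar L_{2:n}^{(N-k)}\le\log_2(N-k)$ to the continuation, and solves the resulting inequality $1+q_1\log_2(N-k)+(1-q_1)\log_2 N\le\log_2 N$ for the smallest admissible $q_1$ — this is how the expression for $q_t$ is \emph{derived} rather than merely verified, which is the design principle the paper emphasizes. You instead run a forward potential argument with $\Phi_t=\log_2 W_t$, proving the one-step inequality $\E[m_t\mid\filtration_{t-1}]\le\E[\Phi_{t-1}-\Phi_t\mid\filtration_{t-1}]$ and telescoping against $\Phi_0=\log_2 N$, $\Phi_n\ge 0$. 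The two arguments share the same core inequality (the induction step's condition on $q_1$ is exactly your requirement that the expected potential drop on an error round be at least $1$), but your packaging avoids the induction bookkeeping over both $n$ and $N$, makes the role of $q_t$ transparent (it calibrates the expected drop to exactly $1$ per error, so nothing is lost on rounds where $Z_t=0$), and extends naturally to anytime statements since the telescoping holds at every intermediate $t$. What it gives up is the constructive flavor: you verify the stated $q_t$ rather than exhibiting it as the infimum over all label probabilities preserving the bound. Your case analysis is complete — in particular you correctly identify that on an error round the surviving fraction is exactly $\min(A_{1,t},1-A_{1,t})$ and that $q_t\in(0,1]$ because this fraction is at most $1/2$ — so there is no gap.
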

Recall that $A_{1,t}$ is simply the proportion of experts still under consideration that predict $y_t=1$. It is insightful to look at the expression for $q_t$, as it is somewhat intuitive. 
The bigger the disagreement between the experts' predictions, the higher the probability that we collect a label. Conversely, when~$A_{1,t}$ approaches either $0$ or $1$, $q_t$ quickly approaches zero, meaning we rarely collect a label.
Theorem~\ref{thm:LEFM} tells us that, remarkably, we can achieve the same worst-case bound as the full-information forecaster while sometimes collecting much less feedback. 
The proof of this result, in Appendix~\ref{app:thm1}, uses a clean induction argument that constructively gives rise to the expression for $q_t$. This principled way of reasoning identifies, in a sense, the best way to assess disagreement between experts: the specified~$q_t$ is the lowest possible sampling probability that preserves worst-case regret guarantees.

A slightly better regret bound is possible by using a variation of follow the majority, called the boosted majority of leaders.
For this algorithm, the upper bound is endowed with a matching lower bound (including constant factors). In~Appendix~\ref{app:BML}, we devise a label-efficient version of the boosted majority of leaders, retaining the same worst-case regret bound as its full-information counterpart.

\section{General regret bounds without a perfect expert}

In this section, we drop the assumption of the existence of a perfect expert. 
It is therefore no longer sensible to use an infinite learning rate $\eta$, since this would discard very good experts based on their first observed error. 
We consider the general exponentially weighted forecaster described in
Section~\ref{sec:a_general_algorithm}, now simply with $p_t=A_{1,t}$.

For the scenario where $q_t=1$ for all $t$, a classical regret bound is well-known (see, for instance, \cite[Thm~2.2]{lugosi:2006}). 
Specifically, for the general exponentially weighted forecaster, with~$p_t=A_{1,t}$,~$q_t=1$, and uniform weight initialization, we have
\begin{equation}\label{eq:general-regret-bound-qt-one}
\bar R_n\equiv \E[R_n]=\E\lefto[ L_n - \min_{i\in[N]} L_{i,n} \righto] \leq \frac{\ln N}{\eta} + \frac{n \eta}{8}\ . %
\end{equation}
In Theorem~\ref{thm:main} below, we prove a stronger version of~\eqref{eq:general-regret-bound-qt-one} that allows for an adaptive label-collection procedure.
As before, we focus on the {expected regret}, $\bar R_n=\E[R_n]$.
As done in Section~\ref{sec:a_perfect_expert} for the case of a perfect
expert, we identify an expression for~$q_t$, which is not necessarily identically~$1$, but still ensures the bound in~\eqref{eq:general-regret-bound-qt-one} is valid.
To state our main result, we need the following definition, which is guaranteed to be sound by Lemma~\ref{lem:existence}.
\begin{definition}\label{def:q_star}
For $x\in[0,1]$ and $\eta>0$, define
\begin{align}
q^*(x,\eta)&=\inf\left\{q\in(0,1]:x+\frac{q}{\eta}\ln\left(1-x+xe^{-\eta/q}\right)\leq \frac{\eta}{8},\right.\label{eqn:qt_opt}\\
&\qquad\qquad\qquad \left.  1-x+\frac{q}{\eta}\ln\left(x+(1-x)e^{-\eta/q}\right)\leq \frac{\eta}{8}\right\}\ .\nonumber
\end{align}
\end{definition}
In the following theorem, we present the label-efficient version of~\eqref{eq:general-regret-bound-qt-one}.
\begin{theorem}\label{thm:main}
Consider an exponentially weighted forecaster with $p_t=A_{1,t}$ and
$$q_t\geq q^*(A_{1,t},\eta)\equiv q_t^*\ .$$
For this forecaster, we have 
\begin{equation} \bar R_n=\E\lefto[L_n - \min_{i\in[N]} L_{i,n}\righto] \leq \frac{\ln N}{\eta} + \frac{n \eta}{8}\ .\label{eqn:regret_bound}
\end{equation}
\end{theorem}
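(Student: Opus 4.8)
The plan is to run the standard exponential-weights potential argument, but with the importance-weighted losses $\ell_{i,t}Z_t/q_t$ in place of the true losses, and to show that the two inequalities defining $q^*$ are \emph{exactly} what is needed so that the per-round increase of the log-potential is controlled by $\eta^2/8$ rather than the naive $\eta^2/(8q_t^2)$ that a crude Hoeffding-type bound would give. Concretely, I would introduce the potential $W_t=\sum_{i=1}^N w_{i,t}$, note that $W_0=1$ by the uniform initialization, and study $\ln W_n=\sum_{t=1}^n \ln(W_t/W_{t-1})$ by telescoping.

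First I would establish the lower bound. Since all weights stay strictly positive for finite $\eta$, for the fixed index $j$ achieving $\min_i L_{i,n}$ we have $W_n\ge w_{j,n}=\tfrac1N e^{-\eta \sum_t \ell_{j,t}Z_t/q_t}$, hence $\ln W_n\ge -\ln N-\eta\sum_t \ell_{j,t}Z_t/q_t$. Taking expectations and using that $Z_t\sim\text{Ber}(q_t)$ gives the unbiasedness $\E[\ell_{j,t}Z_t/q_t\mid \mathcal F_{t-1}]=\ell_{j,t}$ whenever $q_t>0$, so $\E[\ln W_n]\ge -\ln N-\eta \min_i L_{i,n}$.

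The heart of the proof is the matching per-round upper bound. Conditioning on the past $\mathcal F_{t-1}$ (which fixes $w_{i,t-1}$, $A_{1,t}$, $p_t$, $q_t$) and on the outcome $y_t$, only $Z_t$ drives the weight update, so with $x=A_{1,t}$ I would compute $\E_{Z_t}[\ln(W_t/W_{t-1})]=q_t\ln(1-x+xe^{-\eta/q_t})$ when $y_t=0$ (using $\ell_{i,t}=f_{i,t}$, so the weighted average of $e^{-\eta \ell_{i,t}/q_t}$ equals $1-x+xe^{-\eta/q_t}$), and $\E_{Z_t}[\ln(W_t/W_{t-1})]=q_t\ln(x+(1-x)e^{-\eta/q_t})$ when $y_t=1$. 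Multiplying the two defining inequalities of $q^*(x,\eta)$ by $\eta$ turns them precisely into
\[
q_t\ln(1-x+xe^{-\eta/q_t})\le \tfrac{\eta^2}{8}-\eta x,\qquad q_t\ln(x+(1-x)e^{-\eta/q_t})\le \tfrac{\eta^2}{8}-\eta(1-x).
\]
Since the forecaster predicts $\hat y_t\sim\text{Ber}(A_{1,t})$, its conditional expected loss is exactly $x$ when $y_t=0$ and $1-x$ when $y_t=1$; thus in both cases $\E[\ln(W_t/W_{t-1})\mid\mathcal F_{t-1},y_t]\le \frac{\eta^2}{8}-\eta\,\E[\ell(\hat y_t,y_t)\mid\mathcal F_{t-1},y_t]$. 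I would note here that these inequalities are valid at the \emph{actual} $q_t$, not only at the infimum $q^*$, because each constraint's left-hand side is monotone in $q$, so the feasible set is the interval $[q^*,1]$; this monotonicity is part of what Lemma~\ref{lem:existence} provides. Summing over $t$, taking total expectations, and using $\ln W_0=0$ then yields $\E[\ln W_n]\le \frac{n\eta^2}{8}-\eta\,\E[L_n]$.

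Combining the two bounds and dividing by $\eta$ gives $\E[L_n]-\min_i L_{i,n}\le \frac{\ln N}{\eta}+\frac{n\eta}{8}$, which is the claim. The one technical point I would treat carefully is the boundary case $A_{1,t}\in\{0,1\}$, where $q^*=0$ is admissible and one may take $q_t=0$: the per-round potential inequality can fail there, but this is harmless for regret, since total (weighted) consensus means the forecaster and every expert—including the comparator $j$—incur identical loss, so such rounds contribute nothing to $L_n-L_{j,n}$ and leave $W_t$ unchanged, and I would simply exclude them from the telescoped sum. I expect this correspondence between the algebraic form of $q^*$ and the per-round loss/potential tradeoff to be the main obstacle, both in getting the two cases ($y_t=0$ versus $y_t=1$) to line up with the two constraints and in correctly handling the importance-weighting expectation over $Z_t$.
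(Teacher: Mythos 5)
Your proposal is correct, but it takes a genuinely different route from the paper's. The paper proves Theorem~\ref{thm:main} (in Appendix~\ref{app:main}, in a form allowing non-uniform initial weights) by induction on the horizon $n$, mirroring the argument for Theorem~\ref{thm:LEFM}: it conditions on round $1$, splits the continuation according to $Z_1\in\{0,1\}$, applies the induction hypothesis to the two resulting $(n-1)$-round subproblems (with updated and unchanged weights, respectively), and extracts the constraint on $q_1$ that closes the induction; the two cases $y_1=0$ and $y_1=1$ produce exactly the two inequalities in Definition~\ref{def:q_star}. You instead run the classical log-potential/telescoping argument on $W_t=\sum_i w_{i,t}$ with the importance-weighted losses, and observe that those same two inequalities are precisely the condition under which the conditional drift $\E[\ln(W_t/W_{t-1})\mid\filtration_{t-1},y_t]$ is at most $\eta^2/8-\eta\,\E[\ell(\hat y_t,y_t)\mid\filtration_{t-1},y_t]$. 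The underlying algebra is identical; the paper's induction has the advantage of being constructive (it derives $q^*$ as the minimal feasible choice, which is the design principle the paper emphasizes), while your potential argument is more modular, makes the role of the two constraints as a per-round drift condition transparent, and handles the conditioning without appealing to a subproblem with modified initial weights. Your treatment of the degenerate rounds with $q_t=0$ is also correct. One point to tighten: the claim that the feasible set in Definition~\ref{def:q_star} is an interval of the form $[q^*,1]$, so that the constraints hold at any $q_t\ge q^*$ and not just at the infimum, does \emph{not} follow from Lemma~\ref{lem:existence}, which only shows that $1$ is feasible. The monotonicity is true but needs its own short argument: writing $K(u)=\ln\lefto(1-x+xe^{-u}\righto)$, the map $u\mapsto K(u)/u$ is non-decreasing because $K$ is convex with $K(0)=0$, hence $q\mapsto x+\frac{q}{\eta}K(\eta/q)$ is non-increasing in $q$ (and likewise for the second constraint). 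To be fair, the paper's own proof leaves this same step implicit when it passes from the infimum $q_1^*$ to the requirement $q_1\ge q_1^*$.
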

The proof, which is deferred to Appendix~\ref{app:main}, is similar to that used for
Theorem~\ref{thm:LEFM}, but with key modifications to account for the lack of a perfect expert.
In particular, we need to account for the finite, importance-weighted weight updates, and carefully select~$q_t^*$ accordingly.
While the proof allows for non-uniform weight initializations, we focus on the uniform case, as this enables us to optimally tune the learning rate.
The result for general weight initializations is given in Appendix~\ref{app:main}.

Theorem~\ref{thm:main} shows that the proposed label-efficient forecaster satisfies the same expected regret bound as the exponentially weighted forecaster with~$q_t\equiv 1$.
While the expression for $q^*(x,\eta)$ in \eqref{eqn:qt_opt} is somewhat opaque, the underlying motivation is constructive, and it arises naturally in the proof of the theorem. 
In fact, $q^*_t$ is the smallest possible label-collection probability ensuring the regret bound~\eqref{eqn:regret_bound}.
One may wonder if $q^*_t$ is well defined, as it is the infimum of a set that may be empty. 
However, as shown in the following lemma, this set always contains the point~$1$, ensuring that $q^*_t\leq 1$.%
\begin{lemma}\label{lem:existence}
For all $\eta>0$ and $x\in[0,1]$, we have
$$1\in \left\{q\in(0,1]:x+\frac{q}{\eta}\ln\left(1-x+xe^{-\eta/q}\right)\leq \frac{\eta}{8}\right\}\ .$$
\end{lemma}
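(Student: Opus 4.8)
The plan is to check directly that the point $q=1$ lies in the set, which requires no infimum computation whatsoever. Substituting $q=1$ into the defining condition, the membership claim reduces to the single inequality
\[
x + \frac{1}{\eta}\ln\left(1 - x + x e^{-\eta}\right) \leq \frac{\eta}{8},
\]
which I must establish for every fixed $x\in[0,1]$ and $\eta>0$.

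First I would recognize the argument of the logarithm as a moment generating function. If $B\sim\text{Ber}(x)$, then $\E[e^{-\eta B}] = (1-x)e^{0} + x e^{-\eta} = 1 - x + x e^{-\eta}$, so the left-hand side equals exactly $x + \frac{1}{\eta}\ln\E[e^{-\eta B}]$. This is precisely the shape to which Hoeffding's lemma applies.

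Next I would invoke Hoeffding's lemma for the bounded random variable $B\in[0,1]$ with mean $\E[B]=x$: for every $s\in\R$ one has $\ln\E[e^{sB}]\leq s\,\E[B] + s^2/8$, where the factor $1/8$ comes from the squared range $(b-a)^2 = 1$. Taking $s=-\eta$ gives $\ln\E[e^{-\eta B}] \leq -\eta x + \eta^2/8$. Dividing through by $\eta>0$ and adding $x$ yields $x + \frac{1}{\eta}\ln\E[e^{-\eta B}] \leq \eta/8$, which is exactly the desired inequality; hence $1$ belongs to the set and the lemma follows.

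I do not expect a genuine obstacle here: once the Bernoulli moment-generating-function interpretation is spotted, the result is immediate, and the constant $1/8$ matches the one already producing the $n\eta/8$ term in \eqref{eq:general-regret-bound-qt-one}, a reassuring consistency check. Should one prefer a self-contained argument avoiding a citation, the inequality can instead be proven by elementary calculus: set $\psi(x)=x+\frac{1}{\eta}\ln(1-x+xe^{-\eta})$, note $\psi(0)=\psi(1)=0$, and bound its interior maximum by analyzing $\psi'$. This route simply re-derives the Hoeffding estimate for a two-point distribution, so the probabilistic argument is the cleaner and more transparent one.
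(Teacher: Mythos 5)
Your proposal is correct and matches the paper's own proof essentially verbatim: both check membership at $q=1$ by recognizing $1-x+xe^{-\eta}$ as $\E[e^{-\eta B}]$ for $B\sim\text{Ber}(x)$ and applying Hoeffding's lemma $\ln\E[e^{sB}]\le s\E[B]+s^2/8$ with $s=-\eta$. No differences worth noting.
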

The proof is presented in Appendix~\ref{app:existence}, and is essentially a consequence of Hoeffding's inequality.
\begin{figure}
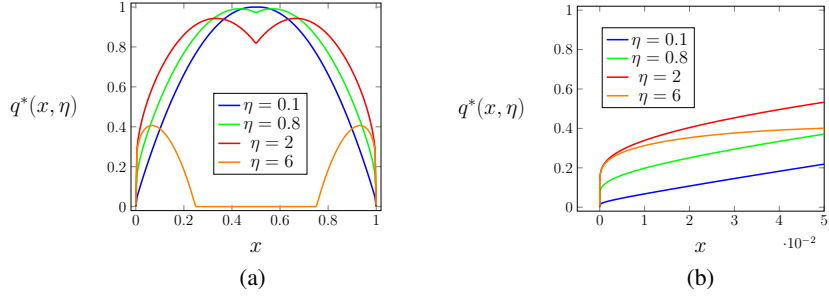

\centering
\subfigure[\hspace{-16.5mm}]{\resizebox{0.36\textwidth}{!}{\figqplotcenterleg%
}}\hspace{0.06\textwidth}
\subfigure[\hspace{-16.5mm}]{\resizebox{0.36\textwidth}{!}{\figqplotzoom%
}}
\caption{The function $q^*(x,\eta)$ for various values of $\eta$.
Panel~(b) is a zoomed version of panel~(a).}
\label{fig:le_LEEWA}
\end{figure}
While $q^*(x,\eta)$ does not admit an analytic solution, its behavior as a function of $x$, depicted in Figure~\ref{fig:le_LEEWA}, is rather intuitive.
Since~$\eta=\sqrt{8(\ln N)/n}$ minimizes the regret bound \eqref{eqn:regret_bound}, we are primarily interested in small values of $\eta$. 
When the learning rate $\eta$ is not too large, the behavior of $q_t^*$
can be interpreted as follows:
the larger the (weighted) disagreement of the experts is, the closer the value of $A_{1,t}$ is to the point $1/2$. 
In this case, $q^*_t$ will be close to~$1$, and we collect a label with high probability. When $A_{1,t}$ is close to 0 or 1, the (weighted) experts essentially agree, so the probability of collecting a label will be small. 
For large learning rates, the behavior of $q_t^*$ appears a bit strange, but note that for $\eta\geq 8$, the regret bound is vacuous. 
Thus, for this case, $q^*(x,\eta)=0$ for all $x\in[0,1]$.

The regret guarantee in Theorem~\ref{thm:main} is valid provided one uses any choice $q_t\geq q_t^*$. The following lemma provides both an asymptotic characterization of $q^*_t$ as $\eta\to 0$, as well as a simple upper bound that can be used for both analytical purposes and practical implementations.
\begin{lemma}\label{lem:q_approx}
For any $x\in[0,1]$, we have
$$\lim_{\eta\to 0} q^*(x,\eta)= 4x(1-x)\ .$$
Furthermore, for any $\eta>0$ and $x\in[0,1]$,
\begin{equation}\label{eq:simplified-qt}
q^*(x,\eta)\leq \min\lefto(4 x(1-x)+\eta/3 , 1\righto)\ .
\end{equation}
\end{lemma}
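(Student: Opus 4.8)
The plan is to reformulate both constraints in Definition~\ref{def:q_star} through the single function
$$F_p(s) \equiv \ln\!\lefto(1-p+pe^{-s}\righto) + ps, \qquad p\in[0,1],\ s\ge 0,$$
and to observe that the left-hand side of the first constraint in \eqref{eqn:qt_opt} equals $F_x(\eta/q)\big/(\eta/q)$, while that of the second equals $F_{1-x}(\eta/q)\big/(\eta/q)$. Hence, writing $g(p,q)\equiv F_p(\eta/q)\big/(\eta/q)$, the feasible set in Definition~\ref{def:q_star} is $\{q\in(0,1]: g(x,q)\le \eta/8,\ g(1-x,q)\le\eta/8\}$, so $q^*(x,\eta)=q^*(1-x,\eta)$ by symmetry. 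I will record three elementary facts about $F_p$: it is convex with $F_p(0)=F_p'(0)=0$ and $F_p''(0)=p(1-p)$; consequently $s\mapsto F_p(s)/s$ is nondecreasing, so $g(p,\cdot)$ is nonincreasing and the feasible set is an up-interval whose left endpoint is $q^*$; and $F_p(s)/s^2\to p(1-p)/2$ as $s\to 0$ by Taylor's theorem.

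For the limit I will prove matching bounds. The upper bound $\limsup_{\eta\to0}q^*(x,\eta)\le 4x(1-x)$ is immediate from the inequality part of the lemma, since $4x(1-x)+\eta/3\to 4x(1-x)$. For the lower bound, fix any $q_1<4x(1-x)$ (assuming $x\in(0,1)$; the endpoints are trivial as $F_0\equiv F_1\equiv 0$). Since $\eta/q_1\to0$, the facts above give $g(x,q_1)\big/(\eta/8)\to 4x(1-x)/q_1>1$, so $g(x,q_1)>\eta/8$ for all small $\eta$; thus $q_1$ is infeasible and, the feasible set being an up-interval, $q^*(x,\eta)>q_1$. Taking the supremum over $q_1<4x(1-x)$ yields $\liminf_{\eta\to0}q^*(x,\eta)\ge 4x(1-x)$.

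For the inequality, if $4x(1-x)+\eta/3\ge 1$ there is nothing to prove beyond $q^*\le 1$ (Lemma~\ref{lem:existence} and symmetry); otherwise I set $q_0\equiv 4x(1-x)+\eta/3\in(0,1)$ and show $q_0$ is feasible, which gives $q^*(x,\eta)\le q_0$ since $q^*$ is an infimum. By monotonicity it suffices to verify $g(p,q_0)\le\eta/8$ for $p\in\{x,1-x\}$, and the crucial tool will be the sharp bound $m_p(u)\le e^{(2p-1)u}$ for the normalized curvature $m_p(u)\equiv F_p''(u)\big/(p(1-p)) = e^{-u}/(1-p+pe^{-u})^2$, which I will obtain by checking $\frac{d}{du}\ln m_p(u)=-1+\frac{2p}{(1-p)e^{u}+p}\le 2p-1$. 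When $p\le 1/2$ this yields $m_p\le 1$, hence $F_p(s)\le \frac{p(1-p)}{2}s^2$ and $g(p,q_0)\le \frac{p(1-p)}{2}\cdot\frac{\eta}{q_0}\le\eta/8$ because $q_0\ge 4p(1-p)$. When $p\ge1/2$, writing $c=2p-1\in[0,1]$ and integrating the curvature bound twice gives $F_p(s)\le p(1-p)\,(e^{cs}-1-cs)/c^2$; substituting $p(1-p)=A/4$ and $c^2=1-A$ with $A=4p(1-p)$, and using the identity $q_0=A/(1-s_0/3)$ (from $q_0=A+\eta/3$ and $\eta=q_0 s_0$, where $s_0\equiv\eta/q_0<3$), the target $F_p(s_0)\le q_0 s_0^2/8$ reduces, after writing $z=cs_0\in[0,s_0]\subseteq[0,3)$ and using $1/(1-z/3)\le 1/(1-s_0/3)$, to the scalar inequality $2(e^z-1-z)/z^2\le 1/(1-z/3)$.

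The main obstacle is therefore this last scalar inequality, which is tight at $z=0$ and is precisely what forces the additive term $\eta/3$ (the cubic Taylor coefficient of $e^z$); getting the sharp exponential curvature bound $m_p(u)\le e^{(2p-1)u}$ rather than the lossy Hoeffding bound $m_p\le 1/(4p(1-p))$ is the technically delicate point. I will prove it in the clean equivalent form $H(z)\equiv 3z^2-2(3-z)(e^z-1-z)\ge 0$ on $[0,3)$ by expanding in a power series: collecting coefficients gives $H(z)=\sum_{k\ge4}\frac{2(k-3)}{k!}z^{k}$, which is manifestly nonnegative for $z\ge 0$. The remaining effort is bookkeeping—handling the degenerate cases $A=0$ (where $F_p\equiv0$) and $c=0$ (the limiting form $p=1/2$, already covered by the $p\le 1/2$ branch), and confirming $s_0<3$ so that the identity $q_0=A/(1-s_0/3)$ is valid.
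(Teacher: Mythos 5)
Your proof is correct, and it takes a genuinely different route from the paper's. The paper fixes $q$ and Taylor-expands $g(\eta)=\eta x+q\ln(1-x+xe^{-\eta/q})$ in $\eta$ around $0$ with a Lagrange remainder, bounds the third derivative by $x(1-x)/q^2$ via the cubic $-\tau+3\tau^2-2\tau^3\le x(1-x)$, and then solves the resulting quadratic in $q$ to extract $q\le 4x(1-x)+\eta/3$; its limit statement is obtained somewhat informally from the leading term of that expansion. You instead change variables to $s=\eta/q$, exploit the symmetry $q^*(x,\eta)=q^*(1-x,\eta)$ to treat both constraints of Definition~\ref{def:q_star} through one function $F_p$, and replace the truncated expansion by the exact curvature bound $F_p''(u)\le p(1-p)e^{(2p-1)u}$ integrated twice, reducing everything to the power-series inequality $\sum_{k\ge 4}\frac{2(k-3)}{k!}z^k\ge 0$. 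What your approach buys: the monotonicity of $s\mapsto F_p(s)/s$ makes the feasible set an up-interval, which turns the limit claim into a clean two-sided argument (infeasibility of any $q_1<4x(1-x)$ for small $\eta$) rather than an asymptotic identification of a root, and the series identity verifies the cubic correction at all orders simultaneously instead of bounding a single remainder term; the cost is the slightly more delicate exponential curvature bound, which you justify correctly via $\frac{d}{du}\ln m_p(u)\le 2p-1$. The degenerate cases ($x\in\{0,1\}$, $p=1/2$, $s_0<3$) are all handled, so I see no gap.
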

The proof of this result is somewhat technical and tedious, and deferred to Appendix~\ref{app:q_approx}. 
In the remainder of this paper, we will use this upper bound extensively.

\section{Label complexity}

We now examine the label complexity, defined as $S_{n}\equiv \sum_{t=1}^{n} Z_{t}$.
In \cite[Thm.~13]{stoltz:05}, it is shown that there exists a setting for which the expected regret of a forecaster that collects $m$ labels is lower-bounded by $cn\sqrt{\ln(N-1)/m}$ for some constant $c$. Hence, in the worst case, the number of collected labels needs to be linear in~$n$ in order to achieve an expected regret that scales at most as~$\sqrt n$. However, since $q^*_t$ can be less than $1$, it is clear that the label-efficient exponentially weighted forecaster from Theorem 2 can collect fewer than $n$ labels in more benign
settings. To this end, we consider a scenario with a unique best expert,
which at each round is separated from the rest in terms of its expected
loss.
To state the condition precisely, we need to define~$\E_t = \E[\ \cdot \mid
\filtration_{t-1}]$ as the expectation at time~$t$ conditional on all possible randomness up
to time $t-1$, that is, for $\filtration_t =
\sigma(\{Z_j,y_j,f_{1,j},\ldots,f_{N,j}\}_{j=1,\ldots,t})$.
With this, we assume that there is a unique expert~$i^*\in[N]$ such that, for all~$i\neq i^*$ and~$t\in[n]$,
\begin{equation*}
    \E_t[ \ell_{i,t} - \ell_{i^*,t} ] \geq \expertgap > 0 
  \qquad
  \text{almost surely.}
\end{equation*}
The parameter~$\expertgap$ characterizes the difficulty of the given
learning problem. If~$\expertgap$ is large, the best expert
significantly outperforms the others, and is thus easily discernible,
whereas if~$\expertgap$ is small, the best expert is harder to identify.
In particular, if the vectors $(y_t,f_{1,t},\ldots,f_{N,t})$ are
independent and identically distributed over rounds $t\in [n]$,
$\expertgap$ is just the difference in expected loss between the best
and the second-best expert in a single round, which is a common measure
of difficulty for stochastic bandits
\citep[Thm.~2.1]{bubeck-12}.
This difficulty measure has also been used in the context of prediction with expert advice
\citep{GaillardStoltzVanErven2014}. 
Similar stochastic assumptions are standard in (batch) active learning, and highly relevant in practical settings (see \cite{castro:08,balcan:06,dasgupta:05a,balcan:14b,hanneke:14} and references therein).
Strictly speaking, our result holds under a more general assumption, where the best expert \emph{emerges after a time~$\tau^*$} instead of being apparent from the first round. This means that the best expert is even allowed to perform the worst for some rounds, as long as it performs well in sufficiently many other rounds. While we state and prove the result under this more general condition in
Appendix~\ref{app:sample-complexity}, we present the simpler assumption
here for clarity.

We now state our main result for the label complexity.
\begin{theorem}\label{thm:sample-complexity}
Consider the label-efficient exponentially weighted forecaster from
Theorem~\ref{thm:main} with $q_t = \min\lefto(4
A_{1,t}(1-A_{1,t})+\eta/3 , 1\righto)$ and any $\eta > 0$. Suppose that
there exists a single best expert $i^*$ such that, for all $i \neq i^*$ and all $t\in[n]$,
\begin{equation*}%
  \E_t[\ell_{i,t} - \ell_{i^*,t}] \geq \Delta > 0
  \qquad
  \text{almost surely.}
\end{equation*}
Then, for any $n \geq 4$, the expected label complexity is at most
\begin{equation}\label{eqn:sample-complexity-main}
  \E[S_n] \leq \frac{50}{\eta \Delta^2}\ln\Big(\frac{N\ln n}{\eta}\Big) + 3\eta n + 1 \ .
\end{equation}
\end{theorem}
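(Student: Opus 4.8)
The plan is to bound the expected number of queries via $\E[S_n]=\sum_{t=1}^n\E[q_t]$, which holds because $Z_t$ is conditionally Bernoulli with mean $q_t$, and then to show that $q_t$ is small for all but a short initial segment of rounds. Using the prescribed $q_t=\min(4A_{1,t}(1-A_{1,t})+\eta/3,1)$, I would first peel off the floor via $q_t\le 4A_{1,t}(1-A_{1,t})+\eta/3$, so that the $+\eta/3$ term accounts for the $\Theta(\eta n)$ contribution, and then concentrate on $\sum_t\E[4A_{1,t}(1-A_{1,t})]$. Writing $v_{t-1}=w_{i^*,t-1}/\sum_j w_{j,t-1}$ for the normalized weight of the best expert, a short computation using $f_{i^*,t}\in\{0,1\}$ gives $4A_{1,t}(1-A_{1,t})\le\min(1,\,4(1-v_{t-1}))$, and since $\sum_j w_{j,t-1}\ge w_{i^*,t-1}$,
\[
1-v_{t-1}\le\sum_{i\ne i^*}\frac{w_{i,t-1}}{w_{i^*,t-1}}=\sum_{i\ne i^*}\exp\Big(-\eta\sum_{s=1}^{t-1}\frac{(\ell_{i,s}-\ell_{i^*,s})Z_s}{q_s}\Big).
\]
The structural feature that makes everything work is that $q_t\ge\eta/3>0$ always, so the importance-weighted increments are well defined, bounded in absolute value by $3/\eta$, and unbiased: conditioning first on $Z_t$ and then on the outcome, $\E[(\ell_{i,t}-\ell_{i^*,t})Z_t/q_t\mid\filtration_{t-1}]=\E_t[\ell_{i,t}-\ell_{i^*,t}]\ge\Delta$.

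Next I would control the exponent $\tilde L_{i,t-1}:=\sum_{s\le t-1}(\ell_{i,s}-\ell_{i^*,s})Z_s/q_s$ for each fixed $i\ne i^*$. By the previous step, $\tilde L_{i,t-1}$ is a sum of bounded martingale increments with a positive conditional drift of at least $\Delta$ per round, and per-step conditional variance at most $\E_t[(\ell_{i,t}-\ell_{i^*,t})^2/q_t]\le 3/\eta$. A Freedman/Bernstein-type concentration inequality then yields, for a universal constant $c>0$,
\[
\P\big(\tilde L_{i,t-1}<(t-1)\Delta/2\big)\le\exp\big(-c\,\eta\Delta^2(t-1)\big).
\]
A union bound over the $N-1$ suboptimal experts defines a good event $G_t$ on which $1-v_{t-1}\le(N-1)\exp(-\eta(t-1)\Delta/2)$, with $\P(G_t^c)\le(N-1)\exp(-c\eta\Delta^2(t-1))$.

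Finally I would combine the two regimes. On $G_t$ the quantity $4(1-v_{t-1})$ is exponentially small, whereas on $G_t^c$ I use the deterministic bound $4A_{1,t}(1-A_{1,t})\le1$; crucially, since $1-v_{t-1}$ is a probability, the rare failure event costs at most one query per round rather than the (enormous) weight ratio. Hence $\E[4A_{1,t}(1-A_{1,t})]\le 4(N-1)e^{-\eta(t-1)\Delta/2}+(N-1)e^{-c\eta\Delta^2(t-1)}$. Splitting the sum over $t$ at a burn-in time $T_0\asymp\frac{1}{\eta\Delta^2}\ln\frac{N\ln n}{\eta}$ (paying the trivial bound $1$ for $t\le T_0$ and summing the geometric tails for $t>T_0$) produces the leading term of order $\frac{1}{\eta\Delta^2}\ln\frac{N\ln n}{\eta}$; combined with the $\Theta(\eta n)$ floor term and an $O(1)$ rounding contribution, this matches the stated bound.

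The hard part will be precisely the large variance of the importance-weighted loss estimates. Because $q_s$ can be as small as $\eta/3$, the increments of $\tilde L_{i,\cdot}$ reach size $3/\eta$ and have per-step second moment $\Theta(1/\eta)$, so the tempting route of showing that the expected weight ratio $\E[w_{i,t-1}/w_{i^*,t-1}]=\E[\exp(-\eta\tilde L_{i,t-1})]$ decays fails outright: the one-step exponential moment is not contractive, as the variance term of order $\eta$ swamps the drift $\eta\Delta$ when $\Delta$ is small. The resolution, as above, is to avoid the exponential moment entirely and instead bound only the lower-tail probability of $\tilde L_{i,t-1}$, pairing it with the trivial bound $1-v_{t-1}\le1$ on the complement. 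I expect the remaining work to be bookkeeping: matching the precise logarithmic factors (in particular the $\ln\ln n$, which I anticipate arising from a dyadic peeling over time scales in the concentration step) and tracking the numerical constant.
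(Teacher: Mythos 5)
Your proposal is correct, and it reaches the bound by a route whose core ingredients match the paper's (a burn-in phase of length $\approx \frac{1}{\eta\Delta^2}\ln(\cdot)$, the weight-ratio bound $1-v_{t-1}\le \sum_{i\neq i^*}e^{-\eta \tilde L_{i,t-1}}$ controlling $q_t$, and Freedman-type concentration of the importance-weighted loss gaps with conditional variance $\le 3/\eta$), but the accounting is genuinely different. The paper works with a single global event $\event=\{q_t\le 4\eta/3 \text{ for all } t>\tau\}$, established via an \emph{anytime} version of Freedman's inequality plus a union bound over experts; the failure of $\event$ is charged at $n\delta_1$ labels, and $\delta_1$ is tuned to $\eta$ so that this cost merges with the $\eta n$ term. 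You instead bound $\E[S_n]=\sum_t\E[q_t]$ round by round, pairing a fixed-time Bernstein/Freedman tail bound $\P(G_t^c)\le (N-1)e^{-c\eta\Delta^2(t-1)}$ with the trivial bound $q_t\le 1$ on the complement, and summing the resulting geometric tails. This buys you two things: you never need the anytime inequality (the per-round failure probabilities are summed directly against a bounded loss, so no uniform-in-$t$ control is required), and the failure cost totals $O(1/(\eta\Delta^2))$ rather than $n\delta_1$, so no tuning of $\delta_1$ is needed. A consequence you seem not to have noticed is that your argument does not actually need the $\ln\ln n$ inside the logarithm at all --- you attribute it to dyadic peeling, but in the paper it enters only through the $\ln(n)\delta$ failure probability of the anytime Freedman bound, which your per-round scheme avoids; your bound would be (up to constants) $\frac{C}{\eta\Delta^2}\ln\frac{N}{\eta}+O(\eta n)$, which is at most the stated one. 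The remaining gaps are indeed only bookkeeping: the martingale increments are bounded by $1/q_t+1\le 3/\eta+1$ rather than $3/\eta$, the case $\eta>3$ should be dispatched trivially (there $q_t\equiv 1$ and $3\eta n\ge n$), and the geometric tail sums contribute $O(1/(\eta\Delta^2))$ terms that must be folded into the leading constant; none of these threatens the argument.
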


\begin{proofsketch}
Initially, the sampling probability $q_t$ is large, but as we collect
more labels, it will become detectable that one of the experts is better
than the others. 
As this happens, $q_t$ will tend to decrease until it (nearly) reaches
its minimum value $\eta/3$. 
We therefore divide the forecasting process into time $t \leq \tau$
and $t > \tau$. 
With a suitable choice of $\tau \approx 1/(\eta\expertgap^2)$
(up to logarithmic factors), we can guarantee that the sampling
probability is at most $q_t\leq 4\eta/3$ for all $t > \tau$ with
sufficiently high probability. 
This is shown by controlling the
deviations of the cumulative importance-weighted loss
differences~$\cumuloss^i_t=\sum_{j=1}^t (l_{i,j}- l_{i^*,j})/q_j$
for~$i\neq i^*$ from their expected values by using an anytime version of
Freedman's inequality.
With this, we simply upper bound the label complexity
for the first~$\tau$ rounds by $\tau$, and over the remaining rounds, the
expected number of collected labels is roughly $\eta (n-\tau) \leq \eta
n$.
This leads to a total expected label complexity of $1/(\eta\Delta^2)+\eta
n$, up to logarithmic factors. The full proof is deferred to
Appendix~\ref{app:sample-complexity}.
\end{proofsketch}

As mentioned earlier, the learning rate optimizing the regret bound~\eqref{eqn:regret_bound} is~$\eta=\sqrt{8\ln(N)/n}$. For this particular choice, the label complexity in~\eqref{eqn:sample-complexity-main} is roughly~$\sqrt{n}/\expertgap^2$, up to constants and logarithmic factors.
We have thus established that the label-efficient
forecaster achieves the best of both worlds: it queries sufficiently
many labels in the worst case to obtain optimal regret guarantees, while
simultaneously querying much fewer labels in more benign settings.
It is interesting to note that the label complexity dependence
of~$1/\expertgap^2$ on $\expertgap$ is less benign than the dependence of the regret bound from, e.g.,~\cite[Thm.~11]{GaillardStoltzVanErven2014}, which is~$1/\expertgap$. 
The underlying reason for this is that, while the two are similar, the label complexity is not directly comparable to the regret. In particular, the label complexity has much higher variance.

The bound of Theorem~\ref{thm:sample-complexity} relies on setting the sampling probability $q_t$ to be the upper bound on~$q^*_t$ from  Lemma~\ref{lem:q_approx}. 
This bound is clearly loose when $q^*_t$ is approximately zero, and one may wonder if the label complexity of the algorithm would be radically smaller when using a forecaster for which $q_t=q_t^*$ instead. 
With the choice $\eta=\sqrt{8\ln(N)/n}$, which optimizes the bound in~\eqref{eqn:regret_bound}, it seems unlikely that the label complexity will substantially change, as numerical experiments suggest that the label complexity attained with~$q_t$ set as $q^*_t$ or the corresponding upper bound from~\eqref{eq:simplified-qt} appear to be within a constant factor. 
That being said, for larger values of $\eta$, the impact of using the upper bound in~\eqref{eq:simplified-qt} is likely much more dramatic.

\section{Numerical experiments}\label{sec:experiments}

To further assess the behavior of the label-efficient forecaster from Theorem~\ref{thm:main}, we consider a classical active learning scenario in a batch setting, for which there are known minimax rates for the risk under both active and passive learning paradigms.
We will set the sampling probability to be
$$q_t=\min(4 A_{1,t}(1-A_{1,t})+\eta/3 , 1)\ .$$
Let $D_n=\left((X_t,Y_t)\right)_{t=1}^n$ be an ordered sequence of independent and identically distributed pairs of random variables with joint distribution $D$. The first entry of $(X_i,Y_i)$ represents a feature, and the second entry is the corresponding label.
The goal is to predict the label $Y_i\in\{0,1\}$ based on the feature $X_i$. Specifically, we want to identify a map $(x,D_n) \mapsto \hat g_n(x,D_n)\in \{0,1\}$ such that, for a pair $(X,Y)\sim D$ that is drawn independently from $D_n$, we have small (zero-one loss) expected risk
$$\risk(\hat g_n)\equiv \P(\hat g_n(X,D_n)\neq Y)\ .$$

Concretely, we consider the following scenario, inspired by the results in \cite{castro:08}. Let the features $X_i$ be uniformly distributed in $[0,1]$, and $Y_i\in\{0,1\}$ be such that $\P(Y_i=1|X_i=x)=\zeta(x)$. Specifically, let $\tau_0\in[0,1]$ such that $\zeta(x)\geq 1/2$ when $x\geq \tau_0$ and $\zeta(x)\leq 1/2$ otherwise. Furthermore, assume that for all~$x\in[0,1]$, $\zeta(x)$ satisfies
$$c |x-\tau_0|^{\kappa-1}\leq |\zeta(x)-1/2|\leq C |x-\tau_0|^{\kappa-1}\ ,$$
for some $c,C>0$ and $\kappa>1$.
If $\tau_0$ is known, the optimal classifier is simply~$g^*(x)=\1{x\geq \tau_0}$.
The minimum achievable excess risk when learning~$\hat g_n$ from~$D_n$ in this type of problems has been studied in, e.g., \cite{castro:08,tsybakov:97}.
For this setting, it is known that
$$\inf_{\hat g_n} \sup_{\tau_0\in[0,1]} \risk(\hat g_n)-\risk(g^*)\asymp n^{\frac{-\kappa}{2\kappa-1}}\ ,$$
as $n\to\infty$.
However, rather than the classical supervised learning setting above, we can instead consider active learning procedures.
Specifically, consider a sequential learner that can generate feature-queries $X'_i$ and sample a corresponding label $Y'_i$, such that $\P(Y'_i=1|X'_i=x)=\zeta(x)$.
This is often referred to as pool-based active learning.
At time $t$, the learner can choose $X'_t$ as a function of the past $((X'_j,Y'_j))_{j=1}^{t-1}$ according to a (possibly random) sampling strategy $\mathcal{A}_n$.
This extra flexibility allows the learner to carefully select informative examples to guide the learning process.
Similarly to the passive learning setting, the ultimate goal is to identify a prediction rule $(x,D'_n) \mapsto \hat g^A_n(x,D'_n)\in \{0,1\}$, where $D'_n=((X'_t,Y'_t))_{t=1}^n$.
In \cite{castro:08}, it is shown that for this active learning setting, the minimax rates are also known, and given by
$$\inf_{\hat g^A_n,\mathcal{A}_n} \sup_{\tau_0\in[0,1]} \risk(\hat g^A_n)-\risk(g^*)\asymp n^{\frac{-\kappa}{2\kappa-2}}\ ,$$
as $n\to\infty$.
This shows that there are potentially massive gains for active learning, particularly when $\kappa$ is close to 1. 
A natural question is whether similar conclusions hold for streaming active learning.
In this setting, instead of selecting which example~$X'_i$ to query, the learner observes the features~$(X_1,\ldots,X_n)$ sequentially, and decides at each time~$t$ whether or not it should query the corresponding label.
This is analogous to the online prediction setting discussed in this paper.

We now study this setting numerically.
For the simulations, we use the specific choice
$$\zeta(x)=\frac{1}{2}+\frac{1}{2}\text{sign}(x-\tau_0)|x-\tau_0|^{\kappa-1}\ ,$$
to generate sequences $(Y_1,\ldots,Y_n)$, based on a sequence of features  $(X_1,\ldots,X_n)$ sampled from the uniform distribution on~$[0,1]$. Furthermore, we consider the class of $N$ experts such that
$$f_{i,t}=\1{X_t\geq \frac{i-1}{N-1}}\ ,$$
with $i\in[N]$ and $t\in[n]$.

\subsection{Expected regret and label complexity}

\begin{figure}
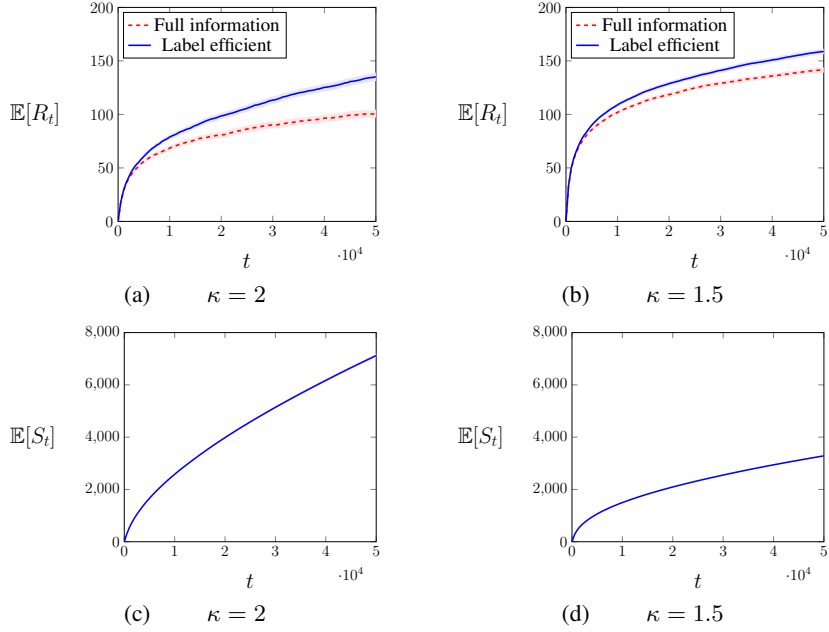

\centering
\subfigure[$\qquad\kappa=2$]{\resizebox{0.36\textwidth}{!}{\figonektwos%
}}\hspace{0.06\textwidth}
\subfigure[$\qquad\kappa=1.5$]{\resizebox{0.36\textwidth}{!}{\figonekonefives%
}}
\subfigure[$\qquad\kappa=2$]{\resizebox{0.36\textwidth}{!}{\figtwoktwos%
}}\hspace{0.06\textwidth}
\subfigure[$\qquad\kappa=1.5$]{\resizebox{0.36\textwidth}{!}{\figtwokonefives%
}}
\caption{Numerical results for expected regret and label complexity when $n=50000$ and~$N=225$. 
Panels (a) and (b) depict the expected regret~$\E[R_t]$ %
as a function of $t$, for $\kappa=2$ and~$\kappa=1.5$ respectively. 
Panels (c) and (d) depict the expected label complexity of the label-efficient forecaster, $\E[S_t]$, as a function of $t$ for $\kappa=2$ and~$\kappa=1.5$ respectively. The expectations were estimated from $500$ independent realizations of the process and the shaded areas indicate the corresponding pointwise~$95\%$ confidence intervals.}\label{fig:regret}
\end{figure}

In the simulations, we set $\tau_0=1/2$ and $N=\lceil\sqrt{n}\rceil+\1{\lceil\sqrt{n}\rceil\textnormal{ is even}}$. 
This choice enforces that $N$ is odd, ensuring the optimal classifier is one of the experts. 
Throughout, we set $\eta=\sqrt{8\ln (N)/n}$, which minimizes the regret bound \eqref{eqn:regret_bound}. 
First, we investigate the expected regret relative to the optimal prediction rule for the label-efficient exponentially weighted forecaster with~$q_t$ given by~\eqref{eq:simplified-qt}, and compare it with the corresponding regret for the full-information forecaster that collects all labels.
Specifically, the regret at time~$t$ of a forecaster that predicts~$\{\hat Y_j\}_{j=1}^n$ is given by
$$\E[R_t]= \sum_{j=1}^t \E[\ell(\hat Y_t,Y_t)-\ell(g^*(X_t),Y_t)] \ .$$
Furthermore, to study the potential reduction in the number of collected labels, we also evaluate the expected label complexity $\E[S_t]$ of the label-efficient forecaster.
To approximate the expectations above, we use Monte-Carlo averaging with $500$ independent realizations.
Further experimental details are given in Appendix~\ref{app:experiments}.
The results are shown in Figure~\ref{fig:regret}. 
We see that the regret is comparable for the full-information and label-efficient forecasters, albeit slightly higher for the latter. 
Since $P(g^*(X)\neq Y)=\frac{1}{2}-\frac{1}{\kappa 2^\kappa}$, the expected cumulative loss of the optimal classifier grows linearly with $t$.
For instance, when $\kappa=2$, we have $\sum_{j=1}^t \E[\ell(g^*(X_t),Y_t)]=3t/8$.
Hence, the regret relative to the best expert is much smaller than the pessimistic (i.e., worst-case for adversarial environments) bound in \eqref{eqn:regret_bound}.
We also observe that the expected label complexity grows sub-linearly with $t$, as expected, and that $\E[S_n]\ll n$, demonstrating that a good prediction rule can be learned with relatively few labels. 
When $\kappa=1.5$, the number of collected labels is significantly smaller than when $\kappa=2$.
This is in line with the known minimax rates for active learning from~\cite{castro:08}.
To further examine this connection, we now turn to normalized regret.

\subsection{Normalized regret relative to the number of samples}

\begin{figure}
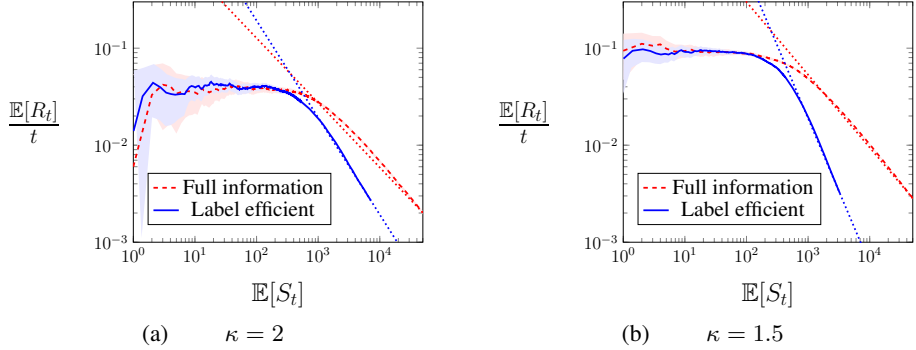

\centering
\subfigure[$\qquad\kappa=2$]{\resizebox{0.40\textwidth}{!}{\figthreektwos%
}}\hspace{0.06\textwidth}
\subfigure[$\qquad\kappa=1.5$]{\resizebox{0.40\textwidth}{!}{\figthreekonefives%
}}
\caption{Numerical results for the normalized regret as a function of the expected label complexity when $n=50000$ and~$N=225$. %
The straight dotted lines are displayed for comparison, and have slopes given by $-\kappa/(2\kappa-1)$ (full information) and $-\kappa/(2\kappa-2)$ (label efficient). 
The expectations were estimated from $500$ independent realizations of the process and the shaded areas indicate the corresponding pointwise~$95\%$ confidence intervals.}\label{fig:average_regret}
\end{figure}

To relate the results of the label-efficient forecaster with known minimax rates for active learning, we investigate the expected regret normalized by the number of samples.
Specifically, let
$$r(t)=\frac{1}{t}\E[R_t]=\frac{1}{t}\sum_{j=1}^t \E[\ell(\hat Y_t,Y_t)-\ell(g^*(X_t),Y_t)]\ .$$
For the full-information forecaster, we expect that $r(t)\asymp t^{-\kappa/(2\kappa-1)}$ as $t\to\infty$.
The same holds for the label-efficient forecaster, but in this case, the relation between~$r(t)$ and the expected number of collected labels~$\E[S_t]$ is more interesting.
If the label-efficient forecaster performs optimally, we expect $r(t) \asymp \E[S_t]^{-\kappa/(2\kappa-2)}$ as $t\to\infty$. 
To examine this, we plot~$r(t)$ against~$\E[S_t]$ (which equals~$t$ for the full-information forecaster) in logarithmic scales, so the expected asymptotic behavior corresponds to a linear decay with slopes given by $-\kappa/(2\kappa-1)$ for the full-information forecaster and $-\kappa/(2\kappa-2)$ for the label-efficient forecaster.
This is shown in Figure~\ref{fig:average_regret} for~$\kappa=1.5$ and~$\kappa=2$.

We see that the observed behavior is compatible with the known asymptotics for active learning, and similar results arise when considering different values of $\kappa$.
More importantly, it appears that the label-efficient forecaster optimally adapts to the underlying setting.
This is remarkable, as the label-efficient forecaster does not rely on any domain knowledge.
Indeed, it has no knowledge of the statistical setting, and in particular, it has no knowledge of the parameter $\kappa$, which encapsulates the difficulty of the learning task.
Note that our regret bounds are too loose to provide a theoretical justification of these observations via an online-to-batch conversion, and that such theoretical analyses will only be fruitful when considering non-parametric classes of experts, for which the asymptotics of the excess risk are $\omega(1/\sqrt{n})$.

\section{Discussion and outlook}

In this paper, we presented a set of adaptive label-efficient algorithms. 
These follow from a very straightforward design principle, namely, identifying the smallest possible label collection probability $q_t$ that ensures that a known worst-case expected regret bound is satisfied. 
This leads to simple, yet powerful, algorithms, endowed with best-of-both-worlds guarantees.
We conjecture that a similar approach can be used for a broader class of prediction tasks and losses than what is considered in this paper. For instance, the results we present can be straightforwardly extended to a setting where the expert outputs take values in $[0,1]$, as long as the label sequence and forecaster prediction remain binary and take values in $\{0,1\}$. In fact, the same inductive approach can be used when $y_t\in[0,1]$ and one considers a general loss function. However, the resulting label collection probability will be significantly more complicated than that of Definition~\ref{def:q_star}. An interesting side effect of our analysis is that it leads to an inductive proof of the regret bound for standard, full-information algorithms.
Extending the label complexity result, and in particular connecting it with known minimax theory of active learning in statistical settings, remains an interesting avenue for future research.
Finally, another intriguing direction is to extend our approach to the bandit setting. In the setting we consider in this paper, we observe the losses of all experts when observing a label. In contrast, in the bandit setting, only the loss of the selected arm would be observed for each round. This would necessitate the forecaster to incorporate more exploration in its strategy, and the analysis of a label-efficient version seems like it would be quite different from what is used in this paper, although some of the ideas may transfer.

\subsection*{Acknowledgements}
The authors would like to thank Wojciech Kot{\l}owski, G\'{a}bor Lugosi, and Menno van Eersel for fruitful discussions that contributed to this work. The algorithmic ideas underlying this work were developed when R. Castro was a research fellow at the University of Wisconsin -- Madison. This work was partly done while F. Hellstr\"om was visiting the Eindhoven University of Technology and the University of Amsterdam supported by EURANDOM and a STAR visitor grant, the Wallenberg AI, Autonomous Systems and Software Program (WASP) funded by the Knut and Alice Wallenberg Foundation, and the Chalmers AI Research Center (CHAIR). T. van Erven was supported by the Netherlands Organization for Scientific Research (NWO), grant number VI.Vidi.192.095.

\bibliography{biblio}
\bibliographystyle{apalike}

\newpage

\appendix

\section{Proof of Theorem~\ref{thm:LEFM}}\label{app:thm1}

\setcounter{theorem}{\calc{\getrefnumber{thm:LEFM}-1}}

\begin{theorem} %
Consider the exponentially weighted follow the majority forecaster with~$\eta=\infty$.
Specifically, let~$p_t=\1{A_{1,t}\geq 1/2}$, so that~$\hat y_t=\1{A_{1,t}\geq 1/2}$. Furthermore, let
\[
q_t=\begin{cases}
    0 & \text{ if } A_{1,t}\in\{0,1\},\\
-\frac{1}{\log_2\min \left(A_{1,t},1-A_{1,t}\right)} & \text{ otherwise.}
\end{cases}
\]
For this forecaster, we have
$$\bar L^{(N)}_n\leq \log_2 N \ .
$$
\end{theorem}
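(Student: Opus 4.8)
The plan is to track the number of surviving experts and show that every mistake the forecaster makes is ``paid for'' by an expected drop in a logarithmic potential. Concretely, let $V_t = N\sum_{i=1}^N w_{i,t}$ denote the number of experts still consistent with all observed outcomes up to time $t$, and define the potential $\Phi_t = \log_2 V_t$. Since the weights are uniformly initialized and $\eta=\infty$, we have $V_0 = N$, and the existence of a perfect expert guarantees $V_t \geq 1$ for all $t$; hence $\Phi_0 = \log_2 N$ and $\Phi_n \geq 0$. Writing $\E_t$ for the expectation over the internal randomness $Z_t$ conditional on $\filtration_{t-1}$ and the environment's round-$t$ choices, the crux of the argument is the per-round inequality $\ell(\hat y_t, y_t) \leq \Phi_{t-1} - \E_t[\Phi_t]$.

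I would establish this inequality by a case analysis on $A_{1,t}$, abbreviating $m = \min(A_{1,t}, 1-A_{1,t})$. If $A_{1,t}\in\{0,1\}$, all surviving experts agree; since one of them is perfect, the common prediction equals $y_t$, so $\ell(\hat y_t, y_t)=0$, while $q_t=0$ forces $\Phi_t=\Phi_{t-1}$, and the inequality holds with equality. Otherwise $m\in(0,1/2]$. The decisive observation is that the forecaster errs exactly when the (correct) outcome coincides with the minority prediction; in that event, upon sampling ($Z_t=1$) the entire majority is eliminated and $V_t = m\,V_{t-1}$, whereas if $Z_t=0$ the surviving set is unchanged. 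Hence $\E_t[\Phi_t] = \Phi_{t-1} + q_t\log_2 m$, so that $\Phi_{t-1} - \E_t[\Phi_t] = -q_t\log_2 m = 1$ by the choice $q_t = -1/\log_2 m$, matching the incurred loss of $1$. When instead the outcome matches the majority, $\ell(\hat y_t,y_t)=0$ and $\Phi_{t-1}-\E_t[\Phi_t] = -q_t\log_2(1-m) \geq 0$ since $1-m\in[1/2,1)$, so the inequality again holds. This case is precisely where $q_t$ arises constructively: demanding the smallest sampling probability for which the expected potential drop covers a unit loss, i.e. $-q_t\log_2 m \geq 1$, forces $q_t \geq -1/\log_2 m$, and equality yields the stated formula; one checks that $-1/\log_2 m \in (0,1]$, so it is a valid probability.

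With the per-round inequality in hand, the conclusion follows by taking total expectations and telescoping: $\E[L_n] = \sum_{t=1}^n \E[\ell(\hat y_t,y_t)] \leq \sum_{t=1}^n (\E[\Phi_{t-1}]-\E[\Phi_t]) = \E[\Phi_0] - \E[\Phi_n] \leq \log_2 N$, using $\Phi_0 = \log_2 N$ and $\Phi_n \geq 0$.

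The main obstacle is the per-round inequality, and within it the handling of the adversarial outcome $y_t$: one must argue that the only way to force a mistake is to make the minority correct, and then verify that the resulting multiplicative shrinkage of $V_t$ by the minority fraction $m$, discounted by the sampling probability $q_t$, exactly compensates the unit loss. Everything else — the telescoping, the boundary behavior when all experts agree (where the perfect expert makes the majority prediction correct), and the validity of $q_t$ as a probability — is routine once this computation is set up correctly.
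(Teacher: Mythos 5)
Your proof is correct, and it takes a genuinely different (though closely related) route from the paper. The paper argues by induction on the horizon $n$: it conditions on round $1$, writes the expected loss as a mixture $q_1 \bar L_{2:n}^{(N-k)} + (1-q_1)\bar L_{2:n}^{(N)}$, invokes the induction hypothesis (quantified over all $N$ and all sequences), and solves $1 + q_1\log_2(N-k) + (1-q_1)\log_2 N \leq \log_2 N$ for $q_1$, which is exactly your condition $-q_1\log_2 m \geq 1$. You instead run a direct potential argument with $\Phi_t = \log_2 V_t$, prove the per-round amortized inequality $\ell(\hat y_t,y_t)\leq \Phi_{t-1}-\E_t[\Phi_t]$, and telescope. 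The core computation is identical in both, but your formulation dispenses with the induction hypothesis over arbitrary $N$ and arbitrary continuations, makes the amortization explicit, and extends more transparently to anytime statements; the paper's induction, on the other hand, is the form that generalizes to Theorem~\ref{thm:main}, where the ``potential'' is no longer a simple function of the surviving-expert count and one genuinely needs the recursive structure. Two small points to tighten: when $A_{1,t}=1/2$ and $y_t=0$ the ``minority'' is actually half the experts, but then $m=1/2$, $q_t=1$, and the surviving fraction is still $m$, so your identity $\E_t[\Phi_t]=\Phi_{t-1}+q_t\log_2 m$ survives the tie; and you should note that $\hat y_t$ is deterministic given $\filtration_{t-1}$ and the round-$t$ advice (since $p_t\in\{0,1\}$), so the left-hand side of your per-round inequality is well defined before averaging over $Z_t$.
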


\begin{proof}[Proof]
The main idea is to proceed by induction on $n$. 
For $n=1$, the result holds trivially, regardless of the choice for~$q_t$. 
Now, suppose that $\bar L^{(N)}_t\leq \log_2 N$ for all values $t\in
[n-1]$, any sequence of observations and expert predictions, and any
number of experts $N$.
Based on this assumption, we will derive a bound for $\bar L^{(N)}_n$. 
Let $k\equiv\sum_{i=1}^N \ell_{i,1}$ be the number of experts that make a mistake when $t=1$. Note that $0\leq k \leq N-1$, as there is one perfect expert.  
When~$k<N/2$, the majority vote $\hat y_1$ is necessarily equal to $y_1$.
Therefore,
\begin{align}
\bar L^{(N)}_n &= \E[\ell(\hat y_1,y_1)]+\sum_{t=2}^n \E[\ell(\hat y_t,y_t)]= \sum_{t=2}^n \E[\ell(\hat y_t,y_t)]\nonumber\\
&= \sum_{t=2}^n q_1\E[\ell(\hat y_t,y_t)|Z_1=1]+(1-q_1)\E[\ell(\hat y_t,y_t)|Z_1=0]\nonumber\\
&= q_1 \bar L_{2:n}^{(N-k)}+(1-q_1)\bar L_{2:n}^{(N)}\nonumber\\
&\leq \log_2 N \ , \label{eqn:cond1} %
\end{align}
where $\bar L_{2:n}^{(N)}$ denotes the expected cumulative loss in rounds $2,\ldots,n$ with $N$
experts. Because the internal state of the algorithm only consists of a
list of experts that have made no errors so far, the task in rounds
$2,\ldots,n$ is equivalent to a task over $n-1$ rounds starting with the
experts that remain after round~$1$.
Therefore, we can apply the
induction hypothesis to obtain $\bar L_{2:n}^{(N-k)} \leq \log_2 (N-k)
\leq \log_2 N$ and $\bar L_{2:n}^{(N)}\leq \log_2 N$, which justifies the
last inequality. We conclude that, when $k<N/2$, the bound holds
regardless of the choice of $q_1$.

On the other hand, if $N/2\leq k \leq N-1$, the forecaster incurs an error at time~$t=1$. 
Using the induction hypothesis and an analogous reasoning as above, we
find that
\begin{equation}\label{eqn:cond2}
\bar L^{(N)}_n=1+q_1 \bar L_{2:n}^{(N-k)}+(1-q_1) \bar L_{2:n}^{(N)} \leq 1+q_1 \log_2(N-k)+(1-q_1) \log_2(N)\ .
\end{equation}
To ensure that the right-hand-side of \eqref{eqn:cond2} satisfies the desired bound, we need to select $q_1$ such that
$$1+q_1 \log_2(N-k)+(1-q_1) \log_2(N)\leq \log_2 N\ ,$$ 
which can be written equivalently as
$$\frac{1}{q_1}\leq \log_2 N - \log_2 (N-k)\ .$$
If we do not observe a label, we do not know the value $k$. All we know is that, since~$y_1\in\{0,1\}$, we have $k\in\left\{\sum_{i=1}^N f_{i,1},N-\sum_{i=1}^N f_{i,1}\right\}$.  
Therefore, to ensure the induction proof works, we need
$$q_1\geq -\frac{1}{\log_2 \min \left(A_{1,1},1-A_{1,1}\right)}\ ,$$
where $A_{1,1}=\frac{1}{N}\sum_{i=1}^N f_{i,1}$.  
Note that the case $k=N$ cannot occur as there is always a perfect expert, but to ensure that the above definition is sound, we define $q_1=0$ when $A_{1,1}\in\{0,1\}$.
\end{proof}

\section{Label-efficient boosted majority of leaders}%
\label{app:BML}

Through a variation of follow the majority, a slightly better regret
bound than the one in Theorem~\ref{thm:LEFM} can be obtained, as well as a matching lower bound. %
As shown by \cite[Proposition~18.1.3]{karlin:2017}, there exists an adversarial strategy for the environment such that any forecaster will incur at least~$\lfloor \log_2 N\rfloor/2\geq \lfloor \log_4 N\rfloor$ errors in expectation.
A matching upper bound can be obtained, when~$N$ is a power of two, by considering a forecaster that incorporates randomness in its predictions.
This is referred to as the \emph{boosted majority of leaders}. As shown
in the following theorem, this procedure can be made label-efficient while ensuring the same expected regret bound.

\setcounter{theorem}{\getrefnumber{thm:sample-complexity}}
\begin{theorem}\label{prp:BML}
Consider an exponentially weighted forecaster for which~$\eta=\infty$ and
$$p_t=\left\{\begin{array}{ll}
0 & \text{ if } A_{1,t}\leq 1/4\\
1+\log_4 A_{1,t} & \text{ if } 1/4<A_{1,t}\leq 1/2\\
-\log_4 (1-A_{1,t}) & \text{ if } 1/2<A_{1,t}\leq 3/4\\
1 & \text{ if } A_{1,t}> 3/4
\end{array}\right.$$
and
$$q_t=\left\{\begin{array}{ll}
0 & \text{ if } A_{1,t}=0\\
-1/\log_4 A_{1,1} & \text{ if } 0<A_{1,t}<1/4\\
1 & \text{ if } 1/4\leq A_{1,t}\leq 3/4\\
-1/\log_4 (1-A_{1,1}) & \text{ if } 3/4<A_{1,t}<1\\
0 & \text{ if } A_{1,t}=1\end{array}\right.  \ .$$
For this forecaster, we have
$$\bar L^{(N)}_n \leq \log_4 N \ .$$ %
\end{theorem}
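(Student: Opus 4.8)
The plan is to mirror the inductive argument used for Theorem~\ref{thm:LEFM}, the only essential new ingredient being that the prediction is now \emph{randomized} according to $\hat y_t\sim\text{Ber}(p_t)$. As before, since $\eta=\infty$ and a perfect expert exists, $\min_{i\in[N]}L_{i,n}=0$, so $\bar L_n^{(N)}=\E[L_n]$, and the internal state of the forecaster is just the list of experts that have made no error so far. I would induct on $n$: the base case $n=1$ is trivial, and I assume $\bar L_t^{(N)}\leq\log_4 N$ for all $t\in[n-1]$, all expert/outcome sequences, and all $N$. Writing $a\equiv A_{1,1}$ for the observed fraction of surviving experts predicting $1$, I condition on round $1$ and use that rounds $2,\dots,n$ are, given the surviving expert set, an identical problem over $n-1$ rounds, so the induction hypothesis applies to whichever set remains.

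The key step is to separate the two possible values of the (unknown) outcome $y_1$. If $y_1=1$, the experts predicting $0$ are wrong, so sampling leaves $Na$ experts and the expected round-$1$ loss is $1-p_1$; if $y_1=0$, sampling leaves $N(1-a)$ experts and the expected loss is $p_1$. Combining the round-$1$ loss with the weight update (sampling with probability $q_1$ discards the wrong experts, not sampling keeps all $N$) and applying the induction hypothesis $\bar L_{2:n}^{(M)}\leq\log_4 M$ to the surviving set of size $M$, both cases reduce to the pair of sufficient conditions
\begin{equation*}
1-p_1+q_1\log_4 a\leq 0\qquad\text{and}\qquad p_1+q_1\log_4(1-a)\leq 0\ .
\end{equation*}
Here the $q_1\log_4 N$ and $(1-q_1)\log_4 N$ contributions sum to $\log_4 N$ and cancel against the target, exactly as in Theorem~\ref{thm:LEFM}.

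It then remains to verify these two inequalities for the prescribed piecewise $p_t$ and $q_t$ across the five regimes of $a$. In the central region $1/4\leq a\leq 3/4$ we have $q_1=1$, and the boosted choice of $p_1$ makes the inequality for the \emph{more likely} label hold with equality, while the other follows from $4a(1-a)\leq 1$, i.e.\ $\log_4\!\big(4a(1-a)\big)\leq 0$; this slack is precisely what upgrades the bound from $\log_2 N$ to $\log_4 N$. In the outer regions $0<a<1/4$ and $3/4<a<1$ the prediction is deterministic ($p_1\in\{0,1\}$), and $q_1$ is tuned so that the dangerous inequality holds with equality while the other is trivially satisfied. Finally, $a\in\{0,1\}$ is degenerate: the perfect expert forces $y_1$ to agree with the unanimous vote, so no loss is incurred and $q_1=0$ is admissible.

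The main obstacle is the verification in the previous paragraph: one must check that \emph{both} sufficient conditions hold simultaneously in every regime, since the forecaster does not know $y_1$ when it decides whether to sample. This requirement is exactly what pins down the piecewise forms of $p_t$ and $q_t$ — the active constraint is always met with equality and the slack constraint via the base-$4$ inequality $4a(1-a)\leq 1$ — and it is where the improvement over follow-the-majority originates. As in Theorem~\ref{thm:LEFM}, I would emphasize that the argument is constructive: solving for the smallest admissible $q_1$ given the boosted $p_1$ regenerates the stated expressions.
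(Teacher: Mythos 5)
Your proposal is correct and follows essentially the same route as the paper's proof: induction on $n$, conditioning on round $1$, splitting on the unknown value of $y_1$ to obtain the two simultaneous constraints on $(p_1,q_1)$, and verifying them regime by regime, with the slack coming from $4a(1-a)\leq 1$ exactly as in the paper. The only quibble is your remark that the equality case in the central region corresponds to the ``more likely'' label --- it is in fact the minority label that is binding --- but this mislabeling does not affect the verification, which is carried out correctly.
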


\begin{proof}
The proof technique is  analogous to that of Theorem~\ref{thm:LEFM}: we use an induction argument to find a choice of $q_t$ that guarantees the desired regret bound. 
We will proceed by analyzing different cases, depending on the value of~$A_{1,t}$.
First, assume that~$A_{1,1}\leq 1/4$.  If $y_1=0$, then
$$\bar L^{(N)}_n = 0+q_1 \bar L_{2:n}^{(N(1-A_{1,1}))}+(1-q_1) \bar
L_{2:n}^{(N)}\ .$$
Thus, taking $q_1\geq 0$ suffices.  If $y_1=1$, then
$$\bar L^{(N)}_n = 1+q_1 \bar L_{2:n}^{(NA_{1,1})}+(1-q_1) \bar L_{2:n}^{(N)}\ ,$$
and, using the same reasoning as before, it suffices to take
$$q_1\geq -\frac{1}{\log_4 A_{1,1}}\ .$$
An analogous reasoning applies when $A_{1,1}>3/4$, so that for this case, it suffices to take
$$q_1\geq -\frac{1}{\log_4 (1-A_{1,1})} \ .$$

Now, consider the case $1/4<A_{1,1}\leq 1/2$.  If $y_1=0$, then
\begin{align*}
\bar L^{(N)}_n &= 1+\log_4 A_{1,1}+q_1 \bar L_{2:n}^{(N(1-A_{1,1}))}+(1-q_1) \bar L_{2:n}^{(N)}\\
&\leq 1+\log_4 A_{1,1}+q_1 \log_4 (N(1-A_{1,1}))+(1-q_1) \log_4 N\ ,
\end{align*}
which implies that
$$q_1\geq -\frac{1+\log_4 A_{1,1}}{\log_4(1-A_{1,1})}\ .$$
Similarly, if $y_1=1$, we must have
$$q_1\geq -\frac{-\log_4 A_{1,1}}{\log_4 A_{1,1}}=1\ .$$
Since we do not know $y_1$ before the decision, the only possibility is to take $q_1=1$.

A similar reasoning applies to the case $1/2<A_{1,1}\leq 3/4$.
Therefore, the above relations determine the expression for $q_t$ in the
theorem, while enforcing the desired regret bound.

\end{proof}

\section{Proof of Theorem~\ref{thm:main}}\label{app:main}

As mentioned after Theorem~\ref{thm:main}, an analogous result holds for non-uniform weight initializations.
We will state and prove this more general result below, from which Theorem~\ref{thm:main} as stated in the main text follows as a special case.

\setcounter{theorem}{\calc{\getrefnumber{thm:main}-1}}
\begin{theorem}[with non-uniform weight initialization]
Consider an exponentially weighted forecaster with initial weight vector~$\vec{w}_{\cdot,0}=(w_{1,0},\ldots,w_{N,0})$ such that $\sum_{i\in[N]} w_{i,0}=1$, $p_t=A_{1,t}$ and
$$q_t\geq q^*(A_{1,t},\eta)\equiv q_t^*\ .$$
For this forecaster, we have 
\begin{equation}\label{eqn:main_result}
\E\lefto[L^{(\vec{w}_{\cdot,0})}_n\righto] \leq \E\lefto[\min_{i\in[N]} \lefto(L_{i,n} - \frac{\ln w_{i,0}}{\eta}\righto) \righto] + \frac{n \eta}{8}\ .
\end{equation}
where the superscript in $L^{(\vec{w}_{\cdot,0})}_n \equiv \sum_{t=1}^n \ell(\hat y_t,y_t)$ makes the dependence on the initial weights explicit.
In particular, for the choice of initial weights $w_{i,0}\equiv 1/N$ for all~$i\in[N]$, we have
\begin{equation*} \bar R_n=\E[L_n] - \E\lefto[ \min_{i\in[N]} L_{i,n}\righto] \leq \frac{\ln N}{\eta} + \frac{n \eta}{8}\ .%
\end{equation*}
\end{theorem}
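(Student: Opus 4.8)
The plan is to run the standard exponential-weights potential argument, adapted to the selective-sampling weight update, and to show that the two inequalities defining $q^*$ in Definition~\ref{def:q_star} are exactly what is needed to charge each round's expected loss against the decrease in potential. Write $W_t=\sum_{i\in[N]} w_{i,t}$ for the total weight, so that $W_0=\sum_i w_{i,0}=1$, and track the potential $\Phi_t=\frac1\eta\ln W_t$ (so $\Phi_0=0$). The whole result follows from a per-round inequality of the form $\E[\ell(\hat y_t,y_t)\mid\filtration_{t-1},\{f_{i,t}\}_i,y_t]\le-\E[\Phi_t-\Phi_{t-1}\mid\cdots]+\eta/8$, summed over $t$ and telescoped. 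This is equivalent to the inductive argument used for Theorem~\ref{thm:LEFM}: peeling off round~$1$ leaves an $(n-1)$-round problem whose weights are no longer uniform, which is precisely why the non-uniform initialization is the natural object to induct on.

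For the per-round step I would condition on $\filtration_{t-1}$ and on $(\{f_{i,t}\}_i,y_t)$, fixing $A_{1,t}$, $p_t=A_{1,t}$ and $q_t$, so that only $\hat y_t\sim\text{Ber}(A_{1,t})$ and $Z_t\sim\text{Ber}(q_t)$ remain random and independent. On the loss side, $\E[\ell(\hat y_t,y_t)\mid\cdots]$ equals $1-A_{1,t}$ if $y_t=1$ and $A_{1,t}$ if $y_t=0$. On the potential side, $W_t=W_{t-1}$ when $Z_t=0$, while when $Z_t=1$ the update shrinks exactly the mass of the erring experts; grouping experts by their prediction gives $\E[\ln(W_t/W_{t-1})\mid\cdots]=q_t\ln(A_{1,t}+(1-A_{1,t})e^{-\eta/q_t})$ for $y_t=1$ and $q_t\ln(1-A_{1,t}+A_{1,t}e^{-\eta/q_t})$ for $y_t=0$. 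Substituting, the required per-round inequality becomes precisely $1-x+\frac q\eta\ln(x+(1-x)e^{-\eta/q})\le\eta/8$ (case $y_t=1$) and $x+\frac q\eta\ln(1-x+xe^{-\eta/q})\le\eta/8$ (case $y_t=0$), with $x=A_{1,t}$ and $q=q_t$, i.e. exactly the two constraints of Definition~\ref{def:q_star}. This is the crux, and it is what makes the opaque definition of $q^*$ inevitable. Since $q_t\ge q^*(A_{1,t},\eta)$ and, by Lemma~\ref{lem:existence}, the feasible set is nonempty with $q^*\le1$, both constraints hold at $q_t$, provided the feasible set is upward closed in $q$; I would check this monotonicity separately, where the substitution $u=\eta/q$ renders each constraint's left-hand side manifestly monotone.

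Telescoping and taking total expectation gives $\E[L_n]\le-\frac1\eta\E[\ln W_n]+\frac{n\eta}8$. To lower-bound the potential, for each fixed $i$ we have pathwise $W_n\ge w_{i,n}=w_{i,0}\,e^{-\eta\tilde L_{i,n}}$ with $\tilde L_{i,n}=\sum_t\ell_{i,t}Z_t/q_t$, hence $-\frac1\eta\ln W_n\le\min_i(\tilde L_{i,n}-\frac1\eta\ln w_{i,0})$. This yields the bound with $\tilde L$ in place of $L$, and it remains to replace $\tilde L_{i,n}$ by $L_{i,n}$. Because $q_t$ is measurable with respect to $\filtration_{t-1}$ and $\{f_{i,t}\}_i$, we have $\E[Z_t/q_t\mid\cdots]=1$ and therefore $\E[\tilde L_{i,n}]=\E[L_{i,n}]$ for each fixed $i$; together with Jensen's inequality $\E[\min_i\,\cdot\,]\le\min_i\E[\,\cdot\,]$ this gives $\E[\min_i(\tilde L_{i,n}-c_i)]\le\min_i\E[L_{i,n}-c_i]$ with $c_i=\frac1\eta\ln w_{i,0}$, which coincides with $\E[\min_i(L_{i,n}-c_i)]$ when the $L_{i,n}$ are deterministic (oblivious environment). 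Specializing to $w_{i,0}=1/N$ makes every offset $-c_i=\frac{\ln N}\eta$ constant, so it pulls out of the minimum and delivers $\bar R_n\le\frac{\ln N}\eta+\frac{n\eta}8$.

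The two delicate points are the per-round reduction itself (verifying that $q_t\ge q^*_t$, not merely $q_t=q^*_t$, preserves the two constraints, which rests on the monotonicity above) and the passage from $\tilde L_{i,n}$ back to $L_{i,n}$ inside the minimum. Although each $\tilde L_{i,n}$ is conditionally unbiased for $L_{i,n}$, the minimum couples the comparator index to the sampling indicators $Z_t$, so $\E[\min_i\tilde L_{i,n}]\le\E[\min_i L_{i,n}]$ is immediate only for an oblivious environment or against a fixed comparator; for an adaptive environment I would instead retreat to the per-comparator bound $\E[L_n]\le\E[L_{i,n}]-\frac1\eta\ln w_{i,0}+\frac{n\eta}8$ (which holds for each fixed $i$) and comment on the resulting gap. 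The remaining work, namely the two case substitutions and the monotonicity verification, is routine.
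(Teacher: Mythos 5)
Your proposal is correct in substance, and the crux is right: the two per-round conditions you derive from the potential $\Phi_t=\frac1\eta\ln W_t$ are exactly the two constraints in Definition~\ref{def:q_star}, which is also where the paper's proof lands. But your route is genuinely different in organization. The paper proceeds by induction on $n$: it conditions on $(y_1,\{f_{i,1}\}_i)$, writes $\bar L_n \le A_{1,1}+(1-2A_{1,1})y_1 + q_1\bar L_{n-1}^{(\vec w_{\cdot,1})}+(1-q_1)\bar L_{n-1}^{(\vec w_{\cdot,0})}$, applies the induction hypothesis to the two $(n-1)$-round problems with weight vectors $\vec w_{\cdot,1}$ and $\vec w_{\cdot,0}$, and observes that averaging $-\ln w_{i,1}$ and $-\ln w_{i,0}$ with weights $q_1$ and $1-q_1$ reproduces the \emph{true} loss $\ell_{i,1}$ plus the log-normalizer (the $\eta\ell_{i,1}/q_1$ in the exponent cancels against the factor $q_1$). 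Consequently the comparator in the paper's bound is $\E[\min_i(L_{i,n}-\frac{\ln w_{i,0}}{\eta})]$ with true losses throughout, and no importance-weighted comparator ever appears. Your telescoping argument instead lands on $\E[\min_i(\tilde L_{i,n}-\frac{\ln w_{i,0}}{\eta})]$ and needs the extra unbiasedness-plus-Jensen step to recover $L_{i,n}$; you correctly flag that this is clean only against an oblivious environment, and that otherwise one retreats to a per-comparator bound. This is a real (if minor) weakening relative to the stated $\E[\min_i(\cdot)]$ form --- though, to be fair, the paper's own combination step $q_1\E[\min_i(\cdot)]+(1-q_1)\E[\min_i(\cdot)]\le\E[\min_i(q_1(\cdot)+(1-q_1)(\cdot))]$ also tacitly treats $L_{i,2:n}$ as the same random variable under $Z_1=0$ and $Z_1=1$, so both arguments are really on the same footing regarding adaptivity. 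The one verification you defer --- that the feasible set in Definition~\ref{def:q_star} is upward closed, so that $q_t\ge q_t^*$ (not just $q_t=q_t^*$) satisfies both constraints --- is indeed needed (the paper uses it silently as well); the cleanest justification is that with $B\sim\text{Ber}(x)$ the function $g(u)=\ln\E[e^{-uB}]$ is convex with $g(0)=0$, hence $g(u)/u$ is non-decreasing in $u=\eta/q$, making each constraint's left-hand side non-increasing in $q$. What each approach buys: yours is shorter, makes $q^*$ transparently the minimal per-round sampling probability compatible with the Hoeffding-type potential drop, and generalizes immediately to non-uniform initialization without any bookkeeping; the paper's induction avoids importance-weighted comparators entirely and is the form that directly yields the constructive "smallest feasible $q_t$" narrative.
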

\begin{proof}
The proof strategy is similar to that used in Theorem~\ref{thm:LEFM}. 
We begin by noting that at time $t$, the internal state of the label-efficient exponentially weighted forecaster is determined by the weight vector $\vec{w}_{\cdot,t-1}=(w_{1,t-1},\ldots,w_{N,t-1})$. 
Therefore, it suffices to focus on the requirements for $q_1$ for an arbitrary weight vector. 
As in Theorem~\ref{thm:LEFM}, we proceed by induction on $n$.

For $n=1$, the theorem statement is trivially true, as this algorithm coincides with the ordinary exponentially weighted forecaster (also, the right-hand-side of \eqref{eqn:main_result} is bounded from below by $1/2$).
Proceeding by induction in $n$, suppose \eqref{eqn:main_result} holds for $1,\ldots,n-1$ outcomes.
Let $\bar L_{n|1}^{(\vec{w}_{\cdot,0})}$ denote the expected cumulative loss of the forecaster given~$(y_1,\{f_{i,1}\}_{i=1}^N)$, i.e., the true label and the expert predictions for time~$t=1$:
\begin{equation*}
    \bar L_{n|1}^{(\vec{w}_{\cdot,0})} = \E\lefto[L^{(\vec{w}_{\cdot,0})}_n \,\,\big\rvert\,\, y_1,\{f_{i,1}\}_{i=1}^N  \righto] \ .
\end{equation*}
Then, we have
\begin{align*}
\bar L_{n|1}^{(\vec{w}_{\cdot,0})} &= %
\E\lefto[\ell(\hat y_1,y_1) \,\,\vert\,\, y_1,\{f_{i,1}\}_{i=1}^N  \righto]+ \E\lefto[\sum_{t=2}^n \ell(\hat y_t,y_t)\,\,\big\rvert\,\, y_1,\{f_{i,1}\}_{i=1}^N \righto]\\
&=A_{1,1}\!+\!(1\!-\!2A_{1,1})y_1+q_1  \E\lefto[ \sum_{t=2}^n\ell(\hat y_t,y_t)\,\big\rvert\,Z_1\!=\!1,y_1,\{f_{i,1}\}_{i=1}^N\righto]\\
&\qquad \!+ (1\!-\!q_1) \E\lefto[\sum_{t=2}^n\ell(\hat y_t,y_t)\,\big\rvert\,Z_1\!=\!0,y_1,\{f_{i,1}\}_{i=1}^N\righto]\\
&\leq A_{1,1}+(1-2A_{1,1})y_1 + q_1 \bar L_{n-1}^{(\vec{w}_{\cdot,1})}+ (1-q_1) \bar L_{n-1}^{(\vec{w}_{\cdot,0})}\ .
\end{align*}
In order to prove the desired result, it is enough to show that
$$
    \bar L_{n|1}^{(\vec{w}_{\cdot,0})} \leq \mathbb E\lefto[\min_{i\in[N]} \lefto( \ell_{i,1} + L_{i,2:n} - \frac{\ln w_{i,0}}{\eta}\righto) \righto] + \frac{n \eta}{8} = \mathbb E\lefto[ \lefto( \ell_{i',1} + L_{i',2:n} - \frac{\ln w_{i',0}}{\eta}\righto) \righto] + \frac{n \eta}{8} \ .
$$
Here, we let $L_{i,2:n} \equiv \sum_{t=2}^n \ell_{i,t}$ and let~$i'$ denote the $\argmin$ of the right-hand side.
Now, using the induction hypothesis, we obtain
\begin{align*}
\bar L_{n|1}^{(\vec{w}_{\cdot,0})} &\leq A_{1,1}+(1-2A_{1,1})y_1 +\frac{(n-1)\eta}{8} \\
&\qquad+ \E\lefto[\min_{i\in[N]} \lefto(
 L_{i,2:n}+q_1\frac{-\ln w_{i,1}}{\eta} + (1-q_1)\frac{-\ln w_{i,0}}{\eta}\righto)\righto] \\
 &\leq A_{1,1}+(1-2A_{1,1})y_1 +\frac{(n-1)\eta}{8} \\
&\qquad+ \E\lefto[ \lefto(
 L_{i',2:n}+q_1\frac{-\ln w_{i',1}}{\eta} + (1-q_1)\frac{-\ln w_{i',0}}{\eta}\righto)\righto] \ .
\end{align*}
In the last step, we used the fact that since the upper bound holds for the minimum $i$, it holds for $i'$ in particular.
To ensure that the bound in the theorem holds, it is thus sufficient to select $A_{1,1}$ such that it satisfies
$$A_{1,1}+(1-2A_{1,1})y_1+\lefto(\frac{q_1}{\eta}\left(\ln w_{i',0}-\ln w_{i',1}\right)-\ell_{i',1}\righto)\leq \frac{\eta}{8}\ .$$
Notice that, after the first observation, we are back in a situation similar to that at time $t=1$, but possibly with a different weight vector. 
Specifically, for $i\in[N]$,
$$w_{i,1} = \frac{w_{i,0} e^{-\eta \ell_{i,1} / q_1}}{\sum_{i=1}^N w_{i,0} e^{-\eta \ell_{i,1} / q_1}}\ .$$
It is important at this point that $w_{i,1}$ depends on the choice $q_1$, so we cannot simply solve the above equation for $q_1$. 
To proceed, it is easier to consider the two possible values of $y_1$ separately.

\paragraph{Case $y_1=0$:} Note that
$$w_{i,1} =\frac{w_{i,0} e^{-\eta \ell(f_{i,1},y_1) / q_1}}{1-A_{1,1}+A_{1,1}e^{-\eta/q_1}}\ .$$
Therefore, it suffices to have
\begin{equation}\label{eqn:y=0}
A_{1,1}+\frac{q_1}{\eta}\ln\left(1-A_{1,1}+A_{1,1}e^{-\eta/q_1}\right)\leq \frac{\eta}{8}\ .
\end{equation}

\paragraph{Case $y_1=1$:} Similarly as above,
$$w_{i,1} =\frac{w_{i,0} e^{-\eta \ell(f_{i,1},y_1) / q_1}}{A_{1,1}+(1-A_{1,1})e^{-\eta/q_1}}\ .$$
Therefore, it suffices to have
\begin{equation}\label{eqn:y=1}
1-A_{1,1}+\frac{q_1}{\eta}\ln\left(A_{1,1}+(1-A_{1,1})e^{-\eta/q_1}\right)\leq \frac{\eta}{8}\ .
\end{equation}

As we do not know the values of $y_1$ when computing $q_1$, we must simultaneously satisfy \eqref{eqn:y=0} and \eqref{eqn:y=1}. 
Nevertheless, these two conditions involve only $\eta$ and $A_{1,1}$.
Thus, we can identify the range of values that $q_1$ can take, as a function of $\eta$ and $A_{1,1}$, while still ensuring the desired regret bound. 
Specifically, we require that $q_1\geq q^*_1$, where
\begin{align*}
q_1^*\equiv q_1^*(A_{1,1},\eta)&=\inf\left\{q\in(0,1]:A_{1,1}+\frac{q}{\eta}\ln\left(1-A_{1,1}+A_{1,1}e^{-\eta/q}\right)\leq \frac{\eta}{8},\right.\\
&\qquad\qquad\qquad \left.  1-A_{1,1}+\frac{q}{\eta}\ln\left(A_{1,1}+(1-A_{1,1})e^{-\eta/q}\right)\leq \frac{\eta}{8}\right\}\ .
\end{align*}
At this point, it might be unclear if $q_1^*$ is well defined, namely, if there always exists $q\in[0,1]$ satisfying both \eqref{eqn:y=0} and \eqref{eqn:y=1}. This is indeed the case, as shown in Lemma~\ref{lem:existence}.
By noting that $\E[\bar L_{i,n|1}^{(\vec{w}_{\cdot,0})}] = \bar L_{i,n}^{(\vec{w}_{\cdot,0})}$, we have completed the induction step.

As stated at the beginning of the proof, looking at $q_1$ suffices to determine the general requirements for $q_t$, concluding the proof.
The statement given in~\eqref{eqn:regret_bound} follows from instantiating the general result with uniform initial weights.
\end{proof}

\section{Proof of Lemma~\ref{lem:existence}}\label{app:existence}
\setcounter{lemma}{\calc{\getrefnumber{lem:existence}-1}}
\begin{lemma}
For all $\eta>0$ and $x\in[0,1]$, we have
$$1\in \left\{q\in(0,1]:x+\frac{q}{\eta}\ln\left(1-x+xe^{-\eta/q}\right)\leq \frac{\eta}{8}\right\}\ .$$
\end{lemma}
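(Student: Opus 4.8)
The plan is to recognize the expression $1-x+xe^{-\eta}$ as a moment generating function and to reduce the whole statement to a single application of Hoeffding's lemma. Since $q=1$ manifestly lies in $(0,1]$, it suffices to verify that $q=1$ satisfies the defining inequality of the set, namely
$$x + \frac{1}{\eta}\ln\lefto(1-x+xe^{-\eta}\righto) \leq \frac{\eta}{8}\ .$$

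First I would introduce a Bernoulli random variable $Y$ with $\P(Y=1)=x$ and $\P(Y=0)=1-x$, so that $\E[Y]=x$ and
$$\E\lefto[e^{-\eta Y}\righto] = (1-x) + x e^{-\eta} = 1-x+xe^{-\eta}\ .$$
With this identification, the left-hand side of the target inequality becomes $\tfrac{1}{\eta}\big(\eta\,\E[Y] + \ln\E[e^{-\eta Y}]\big) = \tfrac{1}{\eta}\ln\E\big[e^{-\eta(Y-\E[Y])}\big]$, i.e. a centered log-moment-generating function of $Y$ evaluated at $-\eta$.

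Next I would apply Hoeffding's lemma to the bounded random variable $Y\in[0,1]$ with parameter $s=-\eta$. Because $Y$ takes values in an interval of length $1$, Hoeffding's lemma gives $\ln\E[e^{-\eta(Y-\E[Y])}] \leq \eta^2(b-a)^2/8 = \eta^2/8$. Dividing through by $\eta>0$ produces exactly the required bound, so $q=1$ belongs to the set and the set is nonempty.

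The only points requiring a line of care are matching the sign and the interval-length factor in Hoeffding's lemma (here $s=-\eta$ and $(b-a)^2=1$), and the degenerate edge cases $x\in\{0,1\}$, where $Y$ is constant and the inequality holds trivially. There is no genuine obstacle: once the probabilistic reinterpretation of $1-x+xe^{-\eta}$ as $\E[e^{-\eta Y}]$ is in place, the claim is a one-line consequence of Hoeffding's lemma.
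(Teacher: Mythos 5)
Your proposal is correct and is essentially the paper's own argument: the paper also introduces a Bernoulli random variable $B\sim\text{Ber}(x)$, identifies $1-x+xe^{-\eta}$ with $\E[e^{-\eta B}]$, and applies Hoeffding's lemma (cited as Lemma~A.1 of Cesa-Bianchi and Lugosi) to get $\ln\E[e^{-\eta B}]\leq -\eta x+\eta^2/8$, which yields the bound after dividing by $\eta$.
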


\begin{proof}
Let $B\sim\text{Ber}(x)$, with $x\in[0,1]$. 
Note that $\E[e^{-\eta B}]=(1-x)+xe^{-\eta}$. Note also that, by \cite[Lem.~A.1]{lugosi:2006} we have $\ln\E[e^{-\eta B}]\leq -\eta x+\eta^2/8$. Putting these two facts together we conclude that
$$x+\frac{1}{\eta}\ln\left(1-x+xe^{-\eta}\right)\leq x-x+\eta/8=\eta/8\ .$$
Therefore, the point $1$ is always contained in Definition~\ref{def:q_star}, concluding the proof.
\end{proof}

\section{Proof of Lemma~\ref{lem:q_approx}}\label{app:q_approx}
\setcounter{lemma}{\calc{\getrefnumber{lem:q_approx}-1}}
\begin{lemma}
For any $x\in[0,1]$, we have
$$\lim_{\eta\to 0} q^*(x,\eta)= 4x(1-x)\ .$$
Furthermore, for any $\eta>0$ and $x\in[0,1]$,
\begin{equation*}%
q^*(x,\eta)\leq \min\lefto(4 x(1-x)+\eta/3 , 1\righto)\ .
\end{equation*}
\end{lemma}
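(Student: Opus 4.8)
The plan is to prove the two claims separately, since the limit statement and the uniform upper bound require different analyses of the defining conditions \eqref{eqn:y=0} and \eqref{eqn:y=1}. First I would observe that, by symmetry of the two constraints under the swap $x\leftrightarrow 1-x$, it suffices to understand one of them; the feasible set for $q^*$ is the intersection of the two feasible sets, and the binding constraint will be whichever is more restrictive. A natural substitution is $u=\eta/q\in[\eta,\infty)$, which turns each constraint into a statement about the function $g(u)=x+\frac{1}{u}\ln(1-x+xe^{-u})$ and its reflection. Since $q^*$ is the infimum of feasible $q$, it corresponds to the supremum of feasible $u$, i.e.\ the largest $u$ for which both constraints hold with slack $\eta/8$ on the right.

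For the limit as $\eta\to0$, I would Taylor-expand. The right-hand side $\eta/8\to0$, so at the boundary the left-hand side of \eqref{eqn:y=0} must also tend to $0$. Writing the constraint as $\frac{q}{\eta}\ln(1-x+xe^{-\eta/q})\leq \eta/8 - x$ and expanding $e^{-\eta/q}=1-\eta/q+\tfrac12(\eta/q)^2-\cdots$, the logarithm becomes $\ln\bigl(1-x\eta/q+\tfrac{x}{2}(\eta/q)^2-\cdots\bigr)$. Using $\ln(1+z)=z-z^2/2+\cdots$ and multiplying by $q/\eta$, the leading terms give $-x+\tfrac{x}{2}(\eta/q)-\tfrac{x^2}{2}(\eta/q)+O((\eta/q)^2)$, so the constraint reduces at leading order to $\tfrac{\eta}{2q}x(1-x)\leq \eta/8$, i.e.\ $q\geq 4x(1-x)$. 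The symmetric constraint \eqref{eqn:y=1} gives the identical leading-order condition $q\geq 4x(1-x)$, so both boundaries collapse to the same value and $\lim_{\eta\to0}q^*(x,\eta)=4x(1-x)$. The care needed here is to check that the expansion is uniform enough that the higher-order terms genuinely vanish in the limit, and to handle the endpoints $x\in\{0,1\}$ where $q^*=0$ trivially.

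For the uniform upper bound, the strategy is to verify directly that $q=\min(4x(1-x)+\eta/3,\,1)$ is feasible, i.e.\ satisfies both \eqref{eqn:y=0} and \eqref{eqn:y=1}; feasibility of this candidate immediately gives $q^*(x,\eta)\leq q$ by the infimum definition. When the $\min$ caps at $1$ there is nothing to prove by Lemma~\ref{lem:existence}, which already puts $1$ in the feasible set, so I would assume $q=4x(1-x)+\eta/3\leq1$. Substituting this $q$ into the left-hand side of \eqref{eqn:y=0}, I would bound the logarithm from above using a quantitative (non-asymptotic) version of the expansion above: a second-order Taylor bound with explicit control of the remainder, for instance via $\ln(1+z)\leq z-z^2/2+z^3/3$ or a convexity/tangent-line argument on $e^{-u}$, to show the whole expression stays below $\eta/8$.

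The main obstacle will be this last step: making the expansion rigorous and non-asymptotic. Unlike the limit, here I need explicit inequalities valid for all $\eta>0$ and all $x\in[0,1]$, and the extra slack $\eta/3$ in the candidate $q$ (rather than the exact $4x(1-x)$) must be shown to exactly absorb the second-order error terms in the logarithm across the whole range. The delicate regime is small $x$ (or $x$ near $1$) combined with the full range of $\eta/q$, where the argument of the logarithm is close to its extreme and the crude bounds may be loose; there one must track constants carefully to confirm that the coefficient $1/3$ suffices rather than some smaller constant. I expect this to require a somewhat tedious case analysis or a carefully chosen auxiliary inequality, which is presumably why the authors describe the proof as \enquote{technical and tedious.}
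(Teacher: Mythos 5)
Your plan for the first claim is essentially the paper's argument: both of you Taylor-expand the defining constraint around $\eta=0$ (the paper writes $g(\eta)=\eta x+q\ln(1-x+xe^{-\eta/q})$ for fixed $q$, checks $g(0)=g'(0)=0$ and $g''(0)=x(1-x)/q$, and reads off $q^*(x,\eta)\to 4x(1-x)$ from $g(\eta)=\eta^2/8$), and the symmetry $x\leftrightarrow 1-x$ disposes of the second constraint. Your leading-order computation is correct, and the endpoint cases $x\in\{0,1\}$ are indeed trivial, so that half of the proposal is sound.

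For the second claim, however, there is a genuine gap, and it sits exactly where you say you ``expect a tedious case analysis'': the decisive inequality is never produced. Your strategy --- verify feasibility of the candidate $q=4x(1-x)+\eta/3$ by a quantitative second-order expansion with controlled remainder --- is the right one (and, as the infimum definition shows, feasibility of the candidate does immediately give the bound). But the step that makes it work is a specific structural fact: the third-order remainder must be bounded by a quantity \emph{proportional to $x(1-x)$}, namely $g'''(\xi)\le x(1-x)/q^2$, which the paper obtains from the Lagrange form $g'''(\xi)=\frac{1}{q^2}(-\tau+3\tau^2-2\tau^3)$ with $\tau=\frac{xe^{-\xi/q}}{1-x+xe^{-\xi/q}}\in[0,x]$ together with an elementary polynomial inequality. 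This factor is not cosmetic. Once it is in hand, the constraint reduces to $\frac{x(1-x)}{2q}+\frac{x(1-x)\eta}{6q^2}\le\frac{1}{8}$, a quadratic inequality in $q$ for which checking the candidate (or, as the paper does, solving for the positive root $2x(1-x)+\sqrt{(2x(1-x))^2+\tfrac{4}{3}x(1-x)\eta}$ and comparing it to $4x(1-x)+\eta/3$) is a three-line computation. Without it, the remainder is merely $O(\eta^3/q^2)$; in the regime you correctly flag as delicate ($x$ near $0$ or $1$, so $q\approx\eta/3$), such a remainder is of order $\eta$ and overwhelms the budget $\eta^2/8$, so the argument would fail rather than merely be loose. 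A generic bound such as $\ln(1+z)\le z-z^2/2+z^3/3$ does not by itself exhibit the $x(1-x)$ factor in the cubic coefficient --- one must track how the cross terms of $z-z^2/2$ combine to give $-\tau(1-\tau)(1-2\tau)$ --- and until that inequality is stated and proved, the upper bound $q^*(x,\eta)\le 4x(1-x)+\eta/3$ is not established.
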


\begin{proof}
As already shown in Lemma~\ref{lem:existence}, we know that $q^*(x,\eta)\leq 1$. Let $\eta>0$ be arbitrary. Note that $q^*(x,\eta)$ needs to be a solution in $q$ of the following equation:
\begin{equation}\label{eqn:q_star_equation}
\underbrace{\eta x +q \ln(1-x+x e^{-\eta/q})}_{\equiv g(\eta)}\leq \frac{\eta^2}{8}\ .
\end{equation}
Note that $\lim_{\eta\to 0} g(\eta)=0$, so we can extend the definition of $g$ to $0$ by continuity. Specifically,
$$g(\eta)\equiv\left\{\begin{array}{ll}
\eta x +q \ln(1-x+x e^{-\eta/q}) & \text{ if } \eta>0\\
0 & \text{ if } \eta=0
\end{array}\right.\ .$$
We proceed by using a Taylor expansion of $g(\eta)$ around $0$. Tedious, but straightforward computations yields
$$g'(\eta)=\frac{\partial}{\partial \eta} g(\eta)=
x\left(1-\frac{1}{x+(1-x)e^{\eta/q}}\right)\ ;\quad g''(\eta)=\frac{1}{q}x(1-x)\frac{e^{\eta/q}}{(x+(1-x)e^{\eta/q})^2}\ ,$$
and
$$g'''(\xi)=\frac{1}{q^2}(-\tau+3\tau^2-2\tau^3)\ ,\text{ with } \tau=\frac{xe^{-\xi/q}}{1-x+xe^{-\xi/q}}\ ,$$
where $\xi>0$. In conclusion,
\begin{align*}
g(\eta)&=g(0)+g'(0)\eta+g''(0)\frac{\eta^2}{2}+g'''(\xi)\frac{\eta^3}{6}\\
&=\frac{1}{q}x(1-x)\frac{\eta^2}{2}+g'''(\xi)\frac{\eta^3}{6}\ .
\end{align*}
where $\xi\in[0,\eta]$. At this point we can examine the structure of the solution of~\eqref{eqn:q_star_equation} when $\eta\to 0$. Note that $q^*(x,\eta)$ necessarily satisfies
$$g(\eta)=\frac{1}{q^*(x,\eta)}x(1-x)\frac{\eta^2}{2}+o(\eta^2)=\frac{\eta^2}{8}\ ,$$
implying that $q^*(x,\eta)\to 4x(1-x)$ as $\eta\to 0$, proving the first statement in the lemma.

For the second statement in the lemma, one needs to more carefully control the error term $g'''(\xi)$. 
We begin by noting that $\tau\in[0,x]$. This implies that $-\tau+3\tau^2-2\tau^3\leq x(1-x)$ (this can be checked by algebraic manipulation\footnote{It suffices to check that the solutions in $\tau$ of $-1+3\tau-2\tau^2\leq 1-x$, if they exist, satisfy $\tau\in[0,x]$.}). 
Therefore, $q^*(x,\eta)\leq q$, where $q$ is the solution of
$$\frac{1}{q}x(1-x)\frac{\eta^2}{2}+\frac{1}{q^2}x(1-x)\frac{\eta^3}{6}=\frac{\eta^2}{8}\ .$$
This is a simple quadratic equation in $q$, yielding the solution
$$q= 2x(1-x)+\sqrt{(2x(1-x))^2+\frac{4}{3}x(1-x)\eta}\ .$$
Although the above expression is a valid upper bound on $q^*(x,\eta)$, it is not a very convenient one. 
A more convenient upper bound can be obtained by noting that
\begin{align*}
\lefteqn{2x(1-x)+\sqrt{(2x(1-x))^2+\frac{4}{3}x(1-x)\eta} \leq 4x(1-x)+\eta/3}\\
&\iff (2x(1-x))^2+\frac{4}{3}x(1-x)\eta \leq \left(2x(1-x)+\eta/3\right)^2\\
&\iff \frac{4}{3}x(1-x)\eta \leq \frac{4}{3}x(1-x)+\frac{\eta^2}{9}\ .
\end{align*}
Thus, we have~$q\leq 4x(1-x)+\eta/3$, concluding the proof.
\end{proof}

\section{Proof of Theorem~\ref{thm:sample-complexity}}\label{app:sample-complexity}

In the proof of Theorem~\ref{thm:sample-complexity}, we will require the
following anytime version of Freedman's inequality: 
\setcounter{theorem}{\getrefnumber{thm:sample-complexity}}\begin{lemma}\label{lem:freedman_anytime}
Let $X_1,\ldots,X_n$ be a martingale difference sequence with respect to
some filtration $\filtration_1 \subset \cdots \subset \filtration_n$ and
with~$|X_t|\leq b$ for all~$t$ almost surely.
Let~$\Sigma_t^2=\sum_{j=1}^t \E[X_j^2\vert \filtration_{j-1}]$. Then,
for any~$\delta < 1/e$ and~$n\geq 4$,
\begin{equation*}
    \P\bigg(\exists t\in[n] : \quad 
      \sum_{j=1}^t X_j 
        > 2 \max\big\{2\sqrt{\Sigma_t^2\ln(1/\delta)},b\ln(1/\delta)\big\}
    \bigg)
    \leq \ln(n)\delta \ .
\end{equation*}
\end{lemma}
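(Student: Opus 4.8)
The plan is to first obtain a time-uniform (anytime) deviation bound for a \emph{fixed} tuning parameter $\lambda$, and then to remove the dependence on the unknown conditional variance $\Sigma_t^2$ through a peeling argument over geometrically spaced variance levels together with a union bound. Throughout I write $L=\ln(1/\delta)$, and note $L>1$ since $\delta<1/e$.

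For the fixed-$\lambda$ step, I would use the standard Bernstein-type cumulant control: for a martingale difference $X_j$ with $|X_j|\le b$, one has $\E[e^{\lambda X_j}\mid \filtration_{j-1}]\le \exp\big(\tfrac{\lambda^2/2}{1-\lambda b/3}\,\E[X_j^2\mid\filtration_{j-1}]\big)$ for every $0\le\lambda<3/b$. Consequently, the process $M_t=\exp\big(\lambda\sum_{j=1}^t X_j-\tfrac{\lambda^2/2}{1-\lambda b/3}\,\Sigma_t^2\big)$ is a nonnegative supermartingale with $M_0=1$. Applying Ville's maximal inequality, $\P(\exists t: M_t\ge 1/\delta)\le\delta$, and rearranging gives that, with probability at least $1-\delta$, simultaneously for all $t$,
\[
\sum_{j=1}^t X_j \;\le\; \frac{L}{\lambda} + \frac{\lambda/2}{1-\lambda b/3}\,\Sigma_t^2 .
\]
This is already anytime in $t$; what remains is that the variance-optimal $\lambda$ depends on the random quantity $\Sigma_t^2$.

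To handle this, I would stratify the range of possible variances. Since $\Sigma_t^2\le nb^2$ always, I set up a geometric grid $v_k=v_0\,e^{k}$ with base $v_0$ of order $b^2 L$ up to some $v_K\ge nb^2$, so that at most $\ln n$ levels suffice. For each level I apply the fixed-$\lambda$ bound above with $\lambda_k=\sqrt{2L/v_k}$ (which satisfies $\lambda_k<3/b$ for the chosen $v_0$), incurring failure probability $\delta$ per level, so that a union bound over the at most $\ln n$ levels gives total failure probability at most $\ln(n)\delta$. On the slice where $\Sigma_t^2\in[v_{k-1},v_k)$, the two summands of the bound are each $\sqrt{Lv_k/2}$ up to the Bernstein correction factor $(1-\lambda_k b/3)^{-1}$; using $\Sigma_t^2\ge v_k/e$ to pass from $v_k$ back to $\Sigma_t^2$, the bound collapses to at most $4\sqrt{\Sigma_t^2 L}$ in the variance-dominated slices and to at most $2bL$ at the base level. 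Since the claimed threshold equals $2\max\{2\sqrt{\Sigma_t^2 L},\,bL\}=\max\{4\sqrt{\Sigma_t^2 L},\,2bL\}$, each slice is covered, completing the argument.

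The main obstacle is the constant bookkeeping in this last step: one must choose the peeling ratio (taking it to be $e$ rather than $2$ is what keeps the number of levels at most $\ln n$ rather than $\log_2 n$) and the base variance $v_0$ so that, after absorbing the Bernstein correction $(1-\lambda_k b/3)^{-1}$ and the factor $\sqrt e$ lost when replacing $v_k$ by $\Sigma_t^2$, all per-level bounds fit under $2\max\{2\sqrt{\Sigma_t^2 L},\,bL\}$ with exactly the stated constant, while keeping $\lambda_k<3/b$ on every level and the total number of levels below $\ln n$ (this is where $n\ge 4$ and $\delta<1/e$ enter). Verifying that the variance-dominated and range-dominated regimes are handled by the higher levels and the base level respectively, with no gap between them, is the crux.
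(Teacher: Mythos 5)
First, a point of reference: the paper does not actually prove this lemma. It imports it from Rakhlin and Sridharan (Lemma~3 of the arXiv version of their paper on predictable sequences), remarking only that the proof there extends verbatim to general filtrations. Your outline --- a Bernstein-type conditional MGF bound, the induced nonnegative supermartingale, Ville's maximal inequality to get the anytime bound for fixed $\lambda$, and then peeling over geometrically spaced variance levels with a union bound --- is exactly the strategy behind that cited result, and your fixed-$\lambda$ step is correct as written.

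The gap is in the final accounting, which you rightly call the crux but then resolve with a budget that cannot work. Writing $L=\ln(1/\delta)$ as you do, you allocate failure probability $\delta$ to each level and claim that at most $\ln n$ levels suffice. However, the base level (covering $\Sigma_t^2 < v_0$) must produce a bound that fits under $2bL$, and minimizing $\frac{L}{\lambda}+\frac{\lambda/2}{1-\lambda b/3}\,v_0$ over admissible $\lambda\in(0,3/b)$ shows that this forces $v_0$ to be at most about $1.4\,b^2L$; the remaining range $(v_0, nb^2]$ then needs at least one further level (and on the order of $\ln(n/L)$ of them in general), so the total count is $\ln n + \Theta(1)$ rather than at most $\ln n$ --- already for $n=4$ and $L$ close to $1$ you need at least two applications while $\ln 4\approx 1.39$. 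Enlarging the peeling ratio does not rescue this, because the ratio $v_k/v_{k-1}$ is itself capped (roughly by $8$) by the requirement that the level-$k$ bound, evaluated at the top of the slice, stay below $4\sqrt{v_{k-1}L}$. The missing idea is that the stated threshold carries a factor-of-two slack over the optimal Freedman-type threshold, so each level can be arranged to fail with probability about $e^{-2L}=\delta^2\leq \delta/e$ rather than $\delta$; this is precisely where the hypothesis $\delta<1/e$ does its work, and it buys you up to $e\ln n$ levels, which is enough to cover the range for all $n\geq 4$. As written, your per-level budget of $\delta$ forecloses that route, so the claimed total of $\ln(n)\delta$ does not follow from your argument.
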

A proof of this result for the special case that $\filtration_t =
\sigma(X_1,\ldots,X_t)$ can be found in
\cite[Lemma~3]{rakhlin:11:arxiv}, which is an extended version
of~\cite{rakhlin:12}. Their proof goes through unchanged for general
filtrations.

We are now ready to prove Theorem~\ref{thm:sample-complexity}.
As aforementioned, we will prove the result under a more general condition than given in the theorem statement.
Specifically, instead of assuming that the best expert in expectation is apparent from the first round, we only require the best expert to emerge after a time~$\tau^*$.
This condition is given in a precise form in~\eqref{eqn:gap}.
Clearly, the assumption stated in the main text implies that the condition in~\eqref{eqn:gap} holds with~$\tau^*=0$.

\setcounter{theorem}{\calc{\getrefnumber{thm:sample-complexity}-1}}
\begin{theorem}[with the best expert emerging after a time~$\tau^*$]
Consider the label-efficient exponentially weighted forecaster from
Theorem~\ref{thm:main} with $q_t = \min\lefto(4
A_{1,t}(1-A_{1,t})+\eta/3 , 1\righto)$ and any $\eta > 0$. Define~$\tau$ as
\begin{equation}\label{eq:tau-defn}
\tau = \ceil{\frac{48 \ln(1/\delta_2)}{\eta \Delta^2} +
  \frac{2}{\eta\Delta} \ln\Big(\frac{N}{\eta}\Big)} \ .
\end{equation}
Suppose that there exists a single best expert $i^*$ and a time~$\tau^*\leq \tau$ such that, for all $i \neq i^*$,
\begin{equation}\label{eqn:gap}
\frac1t \sum_{j=1}^t \E_t[\ell_{i,j} - \ell_{i^*,j}] \geq \Delta > 0
  \qquad
  \text{almost surely for~$t\geq \tau^*$.}
\end{equation}
Then, for any $n \geq 4$, the expected label complexity is at most
\[
  \E[S_n] \leq \frac{50}{\eta \Delta^2}\ln\Big(\frac{N\ln n}{\eta}\Big) + 3\eta n + 1 \ .
\]
\end{theorem}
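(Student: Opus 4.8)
The plan is to control the expected query count through $\E[S_n]=\sum_{t=1}^n \E[q_t]$, which holds because $Z_t$ is conditionally $\mathrm{Ber}(q_t)$. I would split this sum at the threshold $\tau$ from \eqref{eq:tau-defn}, bounding the first $\tau$ terms trivially by $q_t\le 1$. All the work goes into the tail: I will exhibit a high-probability event $\event$ on which $q_t\le 4\eta/3$ for every $t>\tau$, so that the tail contributes at most $\tfrac43\eta n$, while on $\event^c$ the tail is bounded by $n\,\P(\event^c)$.

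First I would translate smallness of $q_t$ into dominance of the best expert's weight. For $i\ne i^*$ set $\cumuloss^i_t=\sum_{j=1}^t \tfrac{Z_j}{q_j}(\ell_{i,j}-\ell_{i^*,j})$; then \eqref{eqn:weight_update} with uniform initialization gives $w_{i,t}/w_{i^*,t}=e^{-\eta\cumuloss^i_t}$. Using $A_{1,t}(1-A_{1,t})\le\min(A_{1,t},1-A_{1,t})$ together with the observation that $\min(A_{1,t},1-A_{1,t})$ is at most the weighted fraction of experts disagreeing with $f_{i^*,t}$, hence at most $\sum_{i\ne i^*}w_{i,t-1}/w_{i^*,t-1}=\sum_{i\ne i^*}e^{-\eta\cumuloss^i_{t-1}}$, one finds that $q_t=\min(4A_{1,t}(1-A_{1,t})+\eta/3,1)\le 4\eta/3$ whenever $\cumuloss^i_{t-1}\ge\tfrac1\eta\ln(4(N-1)/\eta)$ for all $i\ne i^*$. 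The query probability is thus driven to its floor purely by the growth of the importance-weighted loss gaps.

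Next I would lower-bound $\cumuloss^i_t$. I decompose it as $\cumuloss^i_t=\sum_{j=1}^t(\ell_{i,j}-\ell_{i^*,j})+\sum_{j=1}^t X^i_j$ with $X^i_j=(\ell_{i,j}-\ell_{i^*,j})(Z_j/q_j-1)$. Because $\E[Z_j/q_j\mid\filtration_{j-1},\{f_{i,j}\}_{i=1}^N,y_j]=1$, the sequence $(X^i_j)_j$ is a martingale difference sequence for $\filtration_j$, and the floor $q_j\ge\eta/3$ yields $|X^i_j|\le 3/\eta$ and $\E[(X^i_j)^2\mid\filtration_{j-1}]\le 1/q_j\le 3/\eta$, so $\Sigma_t^2\le 3t/\eta$. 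The predictable part is handled by \eqref{eqn:gap}: for $t\ge\tau^*$ the realized losses equal their $\E_t$-values except in the current round, giving $\sum_{j=1}^t(\ell_{i,j}-\ell_{i^*,j})\ge t\Delta-2$. Applying Lemma~\ref{lem:freedman_anytime} to $(-X^i_j)_j$ and a union bound over the $N-1$ suboptimal experts produces an event $\event$ with $\P(\event^c)\le (N-1)\ln(n)\delta_2$, on which the martingale deviation $-\sum_{j\le t}X^i_j$ is at most $4\sqrt{(3t/\eta)\ln(1/\delta_2)}$ for all $t\in[n]$ and all $i\ne i^*$ (the variance branch of the inequality dominates over the relevant range of $t$). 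The two terms of $\tau$ now enter for two distinct reasons: the term of order $\ln(1/\delta_2)/(\eta\Delta^2)$ is tuned so that for $t\ge\tau$ this deviation is at most, say, half of the predictable signal $t\Delta$, leaving $\cumuloss^i_t$ of order $t\Delta$; and the term of order $\tfrac1{\eta\Delta}\ln(N/\eta)$ ensures that the surviving signal clears the threshold $\tfrac1\eta\ln(4(N-1)/\eta)$ from the previous step, forcing $q_{t+1}\le 4\eta/3$. Thus $q_t\le 4\eta/3$ for all $t>\tau$ on $\event$.

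Finally I would assemble $\E[S_n]\le \tau+\tfrac43\eta n+n(N-1)\ln(n)\delta_2$, using $q_t\le1$ off $\event$. Choosing $\delta_2\asymp \eta/(N\ln n)$ makes the failure contribution at most $\eta n$ and collapses $\ln(1/\delta_2)$ together with $\ln(N/\eta)$ into a single $\ln(N\ln n/\eta)$ factor; using $\Delta\le1$ to absorb the $1/(\eta\Delta)$ term into $1/(\eta\Delta^2)$ and accounting for the ceiling's $+1$ then yields \eqref{eqn:sample-complexity-main} (for $\eta\gtrsim1$ the bound is vacuous anyway, since $S_n\le n$). I expect the concentration step to be the main obstacle: one must pick the right martingale, bound its conditional variance by $O(t/\eta)$ via the floor $q_j\ge\eta/3$, and, crucially, obtain control that is simultaneously \emph{anytime} (all $t$) and uniform over the $N-1$ experts so that $q_t$ provably sits at its floor for \emph{every} $t>\tau$; balancing the union-bound cost against the two-term structure of $\tau$ through the choice of $\delta_2$ is the delicate part.
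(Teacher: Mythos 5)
Your proposal is correct and follows essentially the same route as the paper's proof: split at $\tau$, bound the first regime by $\tau$, show via the weight-ratio argument that $q_t$ sits near its floor $\eta/3$ once every $\cumuloss^i_t$ clears a threshold of order $\frac{1}{\eta}\ln(N/\eta)$, establish this on a high-probability event using the anytime Freedman inequality with a union bound over experts, and choose $\delta_2\asymp\eta/(N\ln n)$. The only (immaterial) differences are that you center the martingale at the realized loss gaps rather than at $\E_t[\ell_{i,t}-\ell_{i^*,t}]$ — absorbing the resulting $-2$ and handling the $\E_t$ subtlety slightly more explicitly than the paper does — and that you discard the $b\ln(1/\delta)$ branch of Freedman by noting it is dominated for $t\geq\tau$, where the paper carries both branches into its Lemma on the choice of $\tau$.
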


\begin{proof}
Recall that the estimated losses are $\tilde \ell_{i,t}=\ell_{i,t}Z_t/q_t$.
Let
$\cumuloss^i_t=\sum_{j=1}^t\tilde l_{i,j}-\tilde l_{i^*,j}$ denote the
cumulative estimated loss relative to that of the best expert, and
let~$\tDmin=\min_{i \neq i^*}\cumuloss^i_t$.

Our argument separates the analysis in two regimes: Regime~1, where $t \leq \tau$, and Regime~2, where $t > \tau$.
The specific choice of $\tau$ given above arises naturally later in the analysis. Regime~1, in which labels will be collected frequently, is
expected to be relatively short, so we simply upper-bound $S_\tau \leq
\tau$. It then remains to bound the number of collected labels in
Regime~2. To this end, we need $\tau$ to be chosen such that
\begin{equation}\label{eqn:regime2}
  q_t \leq \frac{4 \eta}{3}
  \quad \text{for all }t > \tau
\end{equation}
with probability at least $1 - \delta_1$, where $\delta_1 \in (0,1]$
will be fixed later. Let $\event$ denote the event that
\eqref{eqn:regime2} holds. It then follows that the expected number of
labels collected in Regime~2 is at most
\begin{align*}
  \E[S_n - S_\tau]
    = \E\Big[\sum_{t=\tau+1}^n q_t\1{\event}\Big]
       + \E\Big[\sum_{t=\tau+1}^n q_t\1{\bar \event}\Big]
    \leq \tfrac{4}{3} \eta n \Pr(\event) + n \Pr(\bar \event)
    \leq \tfrac{4}{3} \eta n + n \delta_1 \ .
\end{align*}
All together, we arrive at the following bound on the expected label
complexity:
\begin{equation}\label{eqn:preresult}
  \E[S_n] \leq \tau + \tfrac{4}{3} \eta n + n \delta_1 \ .
\end{equation}
It remains to verify that the choice of $\tau$ in~\eqref{eq:tau-defn} leads to $\delta_1$ being
sufficiently small. To this end, let
\[
  A^*_t \equiv \frac{\sum_{i : f_{i,t} = f_{i^*,t}}
  w_{i,t-1}}{\sum_{i=1}^N w_{i,t-1}}
\]
denote the weighted proportion of experts that agrees with the best
expert at time $t$. Then 
\begin{equation*}
    A_{t+1}^* \geq \frac{w_{i^*,t}}{\sum_{i=1}^N
    w_{i,t}}=\frac{1}{1+\sum_{i\neq i^*} e^{-\eta\cumuloss^i_t}}
    \geq \frac{1}{1+Ne^{-\eta\tDmin}} \ .
\end{equation*}
Note that~$A_{1,t}\in\{A_{t+1}^*,1-A_{t+1}^* \}$.
Consequently,
\begin{equation*}
  q_{t+1}
  \leq  4 A_{t+1}^*(1-A_{t+1}^*)+\frac{\eta}{3}\leq 4 (1-A_{t+1}^*)+\frac{\eta}{3} \leq
  \frac{4}{1+e^{\eta\tDmin}/N} +\frac{\eta}{3} \ .
\end{equation*}
The desired condition from \eqref{eqn:regime2} is therefore satisfied
if, for all $i \neq i^*$,
\begin{equation}\label{eq:cond-for-regime-2-sample-prob}
    \cumuloss^i_t
    \geq \frac{1}{\eta}\ln\lefto(\frac{N}{\eta}\righto)
    \qquad \text{for all $t = \tau,\ldots,n-1$.}
\end{equation}
To study the evolution of~$\cumuloss^i_t$, we use a martingale argument.
Consider any fixed $i \neq i^*$, and define the martingale
difference sequence
\[
  X_t
    = -(\tilde \ell_{i,t}-\tilde \ell_{i^*,t}) + 
      \E_t[\tilde \ell_{i,t}-\tilde \ell_{i^*,t}]
    = -\frac{(\ell_{i,t} - \ell_{i^*,t})Z_t}{q_t}
      + \E_t[\ell_{i,t}-\ell_{i^*,t}] \ .
\]
Without loss of generality, we may assume that $\eta \leq 3$, because
otherwise \eqref{eqn:preresult} holds trivially for any pair of $\tau$
and $\delta_1$. Then $q_t \geq \eta/3$, $|X_t| \leq 2/q_t \leq 6/\eta$,
and
\[
  \E[X_t^2 | \filtration_{t-1}]
    \leq \E\Big[(\tilde \ell_{i,t} - \tilde \ell_{i^*,t})^2 |
    \filtration_{t-1}\Big]
    = \E\Big[\frac{(\ell_{i,t} - \ell_{i^*,t})^2}{q_t} |
    \filtration_{t-1}\Big]
    \leq \frac{3}{\eta} \ .
\]
Hence, by Lemma~\ref{lem:freedman_anytime}, we have
\begin{equation}\label{eqn:freed0}
  \cumuloss^i_t
  \geq  \sum_{j=1}^t \E_t[\ell_{i,t}-\ell_{i^*,t}]
  -\max\big\{4\sqrt{\frac{3t}{\eta}\ln(1/\delta_2)},\frac{12}{\eta}\ln(1/\delta_2)\big\}
  \qquad
  \text{for all $t\in [n]$} \ .
\end{equation}
Using Assumption~\eqref{eqn:gap}, it follows that
\begin{equation}\label{eqn:freed1}
  \cumuloss^i_t
  \geq  t \Delta
  -\max\big\{4\sqrt{\frac{3t}{\eta}\ln(1/\delta_2)},\frac{12}{\eta}\ln(1/\delta_2)\big\}
  \qquad
  \text{for all $t\geq \tau^*$}
\end{equation}
with probability at least $1-(\ln(n)\delta_2)$ for any $\delta_2 \in
(0,1/e]$. 
By taking $\delta_2 = \min\{\delta_1/(N\ln n),1/e\}$ and
applying the union bound, we can make \eqref{eqn:freed1} hold for all $i
\neq i^*$ simultaneously with probability at least $1-\delta_1$.
A sufficient condition for \eqref{eqn:freed1} to imply \eqref{eq:cond-for-regime-2-sample-prob} is then to take 
\[
  \tau = \ceil{\frac{48 \ln(1/\delta_2)}{\eta \Delta^2} +
  \frac{2}{\eta\Delta} \ln\Big(\frac{N}{\eta}\Big)} \ ,
\]
matching the definition in~\eqref{eq:tau-defn}.
We prove this claim in Lemma~\ref{lem:t-calculation} below.
Note that if our choice of $\tau > n$, then $\eqref{eqn:preresult}$ still holds
trivially, because $S_n \leq n$. Evaluating \eqref{eqn:preresult} with
the given choice of $\tau$ and taking $\delta_1 = \eta$ (assuming~$\eta<1$, since~\eqref{eqn:preresult} holds trivially if $\delta_1 \geq 1$), we
arrive at the following bound:
\begin{align*}
  \E[S_n]
    &\leq \ceil{\frac{48 \ln(1/\delta_2)}{\eta \Delta^2}
      + \frac{2}{\eta\Delta} \ln\Big(\frac{N}{\eta}\Big)} + \tfrac{4}{3} \eta n +
      n \delta_1\\
    &\leq \frac{48 \ln(N\ln(n)/\delta_1)}{\eta \Delta^2}
      + \frac{2}{\eta\Delta} \ln\Big(\frac{N}{\eta}\Big) + \tfrac{4}{3} \eta n +
      n \delta_1 + 1\\
    &= \frac{48 \ln(N\ln(n)/\eta)}{\eta \Delta^2}
      + \frac{2}{\eta\Delta} \ln\Big(\frac{N}{\eta}\Big) + \tfrac{4}{3} \eta n +
      \eta n  + 1\\
    &\leq \frac{50 \ln(N\ln(n)/\eta)}{\eta \Delta^2} + 3\eta n + 1 \ ,
\end{align*}
thus completing the proof.

\end{proof}
\setcounter{theorem}{\getrefnumber{lem:freedman_anytime}}
\begin{lemma}\label{lem:t-calculation}
Assume that
\begin{equation*}
  \cumuloss^i_t
  \geq  t \Delta
  -\max\big\{4\sqrt{\frac{3t}{\eta}\ln(1/\delta_2)},\frac{12}{\eta}\ln(1/\delta_2)\big\}
  \quad
  \text{for all}\quad t \leq n \ .
\end{equation*}
Then, with
\[
  \tau = \ceil{\frac{48 \ln(1/\delta_2)}{\eta \Delta^2} +
  \frac{2}{\eta\Delta} \ln\Big(\frac{N}{\eta}\Big)} \ ,
\]
we have
\begin{equation*}
    \cumuloss^i_t
    \geq \frac{1}{\eta}\ln\lefto(\frac{N}{\eta}\righto)
    \quad \text{for all}\quad t = \tau,\ldots,n-1\ .
\end{equation*}
\end{lemma}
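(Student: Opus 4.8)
The plan is to reduce the claim to a purely deterministic inequality and then verify it branch by branch. Writing $c = \ln(1/\delta_2)$ and $K = \ln(N/\eta)$ for brevity, the hypothesis gives, for every $t \le n$,
$$\cumuloss^i_t \ge t\Delta - \max\Big\{4\sqrt{\tfrac{3t}{\eta}c},\ \tfrac{12}{\eta}c\Big\}.$$
Hence it suffices to show that this right-hand side is at least $K/\eta$ for all $\tau \le t \le n-1$, and since the maximum selects one of two expressions, I would treat the two branches separately. Throughout I use that $\Delta \le 1$, which holds because $\Delta$ lower-bounds an expectation of differences of $\{0,1\}$-valued losses.

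For the constant branch, the requirement is $t\Delta \ge (K + 12c)/\eta$, i.e.\ $t \ge (K + 12c)/(\eta\Delta)$. Comparing termwise against $\tau \ge \frac{48c}{\eta\Delta^2} + \frac{2K}{\eta\Delta}$, the bound $\Delta \le 1$ gives $\frac{48c}{\eta\Delta^2} \ge \frac{12c}{\eta\Delta}$, while $\frac{2K}{\eta\Delta} \ge \frac{K}{\eta\Delta}$ trivially, so $\tau$ already exceeds the threshold and this branch is immediate.

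The square-root branch is the main obstacle, since here the constants must line up exactly with those chosen in $\tau$. The inequality $t\Delta - 4\sqrt{3tc/\eta} \ge K/\eta$ is a quadratic inequality in $s = \sqrt t$, namely $\Delta s^2 - 4\sqrt{3c/\eta}\,s - K/\eta \ge 0$; as this upward parabola has roots of opposite sign and hence a single positive root $s_*$, the inequality holds precisely for $s \ge s_*$, so it is enough to prove $\tau \ge s_*^2$. Solving the quadratic gives $s_* = \frac{2\sqrt{3c/\eta} + \sqrt{12c/\eta + \Delta K/\eta}}{\Delta}$, and the key step is to bound $s_*^2$ without losing constants: a naive $\sqrt{x+y}\le\sqrt x + \sqrt y$ split is too lossy and produces a spurious factor of two, whereas applying $(a+b)^2 \le 2a^2 + 2b^2$ with $a = 2\sqrt{3c/\eta}$ and $b = \sqrt{12c/\eta + \Delta K/\eta}$ yields exactly
$$s_*^2 \le \frac{2(12c/\eta) + 2(12c/\eta + \Delta K/\eta)}{\Delta^2} = \frac{48c}{\eta\Delta^2} + \frac{2K}{\eta\Delta} \le \tau.$$
This matches the definition of $\tau$ term for term (the ceiling only enlarges $\tau$), completing the branch. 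Chaining the two branches with the hypothesis then gives $\cumuloss^i_t \ge K/\eta$ for all $\tau \le t \le n-1$, as required; the only real care needed is the constant-tracking in the square-root branch described above.
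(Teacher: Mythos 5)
Your proof is correct and follows essentially the same route as the paper's: both split on which term attains the maximum, dispatch the constant branch by direct comparison with $\tau$ using $\Delta\le 1$, and handle the square-root branch by solving the quadratic in $\sqrt{t}$ and applying $(a+b)^2\le 2a^2+2b^2$ to recover exactly the two terms in the definition of $\tau$. The constants line up identically, so there is nothing to add.
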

\begin{proof}
We will consider the two possible outcomes of the maximum separately.
First, we consider the case where the first term is the maximum.
Then, we need to find~$\tau$ such that for~$t\geq\tau$,
\begin{align*}
  \cumuloss^i_t
  &\geq  t \Delta-4\sqrt{\frac{3t}{\eta}\ln(1/\delta_2)} - \frac{1}{\eta}\ln\lefto(\frac{N}{\eta}\righto) \geq 0 \ .
\end{align*}
A straightforward calculation shows that this is satisfied for
\begin{equation*}
    t \geq \frac{\Big( \sqrt{ 48\ln(1/\delta_2) + 4\expertgap\ln(N/\eta)   } + 4\sqrt{3\expertgap\ln(1/\delta_2)}\, \,\Big)^2 }{4\eta\expertgap^2} \ .
\end{equation*}
Since~$(a+b)^2\leq 2a^2+2b^2$ for~$a,b>0$, this is satisfied given the simpler condition
\begin{equation*}
    t \geq \ceil{\frac{48}{\eta\expertgap^2}\ln(1/\delta_2) + \frac{2}{\eta\expertgap}\ln\lefto(\frac{N}{\eta}\righto)} = \tau \ .
\end{equation*}
Next, we assume that the second term is the maximum. Then, we have
\begin{align*}
  \cumuloss^i_t
  &\geq  t \Delta-\frac{12}{\eta}\ln(1/\delta_2)  \\
  &\geq \tau \Delta - \frac{12}{\eta}\ln(1/\delta_2)\\
  &= \ceil{\frac{48 \ln(1/\delta_2)}{\eta \Delta^2} +
  \frac{2}{\eta\Delta} \ln\Big(\frac{N}{\eta}\Big)}\expertgap  - \frac{12}{\eta}\ln(1/\delta_2) \\
  &\geq \frac{48 \ln(1/\delta_2)}{\eta \Delta} +
  \frac{2}{\eta} \ln\Big(\frac{N}{\eta}\Big)  - \frac{12\ln(1/\delta_2)}{\eta\expertgap}\\
  &\geq \frac{2}{\eta} \ln\Big(\frac{N}{\eta}\Big) \ ,
\end{align*}
where we used the assumption that~$t\geq \tau$.
Thus, for this case, the desired condition holds with the specified~$\tau$.
Therefore, with the specified~$\tau$, the desired statement holds for~$t=\tau,\dots,n-1$.
\end{proof}

\section{Experimental details}\label{app:experiments}

In this section, we describe the simulation study in Section~\ref{sec:experiments} in detail.

We consider a sequential prediction problem with~$n=50000$ total rounds and~$N=225$ experts.
The number of experts is chosen to be higher than~$\sqrt n$ and odd.
Then, we repeat the following simulation for~$500$ independent runs.
First, we generate~$n$ independent features~$\{X_i\}_{i\in[n]}$ from the uniform distribution on~$[0,1]$.
Then, for each feature~$X_i$, we randomly generate a label~$Y_i$, where the probability~$\P(Y_i=1|X_i=x)=\zeta(x)$, where 

$$\zeta(x)=\frac{1}{2}+\frac{1}{2}\text{sign}(x-1/2)|x-1/2|^{\kappa-1}\ .$$
Note that the optimal prediction of the label is simply~$\1{x\geq 1/2}$.
We run simulations both for~$\kappa=1.5$ and~$\kappa=2$.

We set the experts to be threshold classifiers, with thresholds uniformly spaced across~$[0,1]$.
Specifically, for all $i\in[N]$ and $t\in[n]$,
$$f_{i,t}=\1{X_t\geq \frac{i-1}{N-1}}\ .$$
In order to optimize the regret bound in~\eqref{eqn:regret_bound}, we set~$\eta=\sqrt{8\ln(N)/n}$.
For each time~$t\in [n]$, we consider two different weight vectors:~$\{w^P_{i,t}\}_{i \in [N]}$, corresponding to the passive, full-information forecaster, and~$\{w^A_{i,t}\}_{i \in [N]}$, corresponding to the active, label-efficient forecaster.
The weight for each expert for both forecasters is uniformly initialized as~$w^P_{i,0}=w^A_{i,0}=1/N$.

Then, for each timestep~$t\in [n]$, we proceed as follows.
First, we compute the probability of prediction the label~$1$ for each forecaster, given by~$p^P_t$ for the full-information forecaster and~$p^A_t$ for the label-efficient forecaster, as
\[ p^P_t = A^P_{1,t} =  \frac{\sum_{i=1}^N w^P_{i,t-1} f_{i,t}}{\sum_{i=1}^N
w^P_{i,t-1}}\ , \qquad p^A_t = A^A_{1,t} =  \frac{\sum_{i=1}^N w^A_{i,t-1} f_{i,t}}{\sum_{i=1}^N
w^A_{i,t-1}} \ ,\]
where the expert predictions are computed based on~$X_t$.
Here, the weighted proportion of experts that predict the label~$1$ are given by~$A^P_{1,t}$ for the full-information forecaster and~$A^A_{1,t}$ for the label-efficient forecaster.
On the basis of this, each forecaster issues a prediction for the label, given by~$\hat y_t^P \sim \text{Ber}(p^P_t)$ for the full-information forecaster and~$\hat y_t^A \sim \text{Ber}(p^A_t)$ for the label-efficient one.

Finally, the weights for each forecaster are updated as follows.
For the full-information forecaster, the weight assigned to each expert is updated as 
\begin{equation}
w^P_{i,t}=w^P_{i,t-1}\ e^{-\eta\ell_{i,t}}\ ,
\end{equation}
where~$\ell_{i,t}=\1{ f_{i,t} \neq Y_t }$.
In order to determine whether the label-efficient forecaster observes a label or not, we compute~$q_t=\min(4A_{1,t}^A(1-A_{1,t}^A)-\eta/3,1)$ and generate~$Z_t \sim \text{Ber}(q_t)$.
Then, the weights of the label-efficient forecaster are updated as
\begin{equation}
w^A_{i,t}=w^A_{i,t-1}\ e^{-\eta\frac{\ell_{i,t}Z_t}{q_t}}\ .
\end{equation}
Hence, if~$Z_t=0$, the weights of the label-efficient forecaster are not updated, since the label is not observed.

After all~$n$ rounds have been completed, we compute the regret of the full-information forecaster~$R^P_n$ and the label-efficient forecaster~$R^A_n$ as
\begin{align*}
   R^P_n &= \sum_{t=1}^n \1{\hat y^P_t \neq Y_t}  - \1{ y^*_t \neq Y_t }\ , \qquad
   R^A_n &= \sum_{t=1}^n \1{\hat y^A_t \neq Y_t} - \1{ y^*_t \neq Y_t }\ .
\end{align*}
Here, for each~$t\in[n]$, the optimal prediction is given by~$y^*_t = \1{X_t\geq 1/2}$.
Furthermore, we compute the label complexity~$S_n$ as
\begin{equation*}
   S_n = \sum_{t=1}^n Z_t \ .
\end{equation*}
The results of these simulations are presented in Figure~\ref{fig:regret} and~\ref{fig:average_regret}.

\end{document}